\newtheorem{assumption}{Assumption}
\newtheorem{theorem}{Theorem}
\theoremstyle{definition}
\title{TESSER: Transfer-Enhancing Adversarial Attacks from Vision Transformers via Spectral and Semantic Regularization}
\author{%
  Amira Guesmi \\
  NYU Abu Dhabi\\
  UAE\\
  % Pittsburgh, PA 15213 \\
  \texttt{ag9321@nyu.edu} \\
  % examples of more authors
  \And
  Bassem Ouni \\
  Technology Innovation Institute\\
  UAE \\
  \texttt{Bassem.Ouni@tii.ae} \\
  \And
  Muhammad Shafique \\
  NYU Abu Dhabi \\
  UAE \\
  \texttt{ms12713@nyu.edu} \\
  % \And
  % Coauthor \\
  % Affiliation \\
  % Address \\
  % \texttt{email} \\
  % \And
  % Coauthor \\
  % Affiliation \\
  % Address \\
  % \texttt{email} \\
}
\begin{document}

\maketitle

\begin{abstract}

Adversarial transferability remains a critical challenge in evaluating the robustness of deep neural networks. In security-critical applications, transferability enables black-box attacks without access to model internals, making it a key concern for real-world adversarial threat assessment. While Vision Transformers (ViTs) have demonstrated strong adversarial performance, existing attacks often fail to transfer effectively across architectures, especially from ViTs to Convolutional Neural Networks (CNNs) or hybrid models. In this paper, we introduce \textbf{TESSER} --- a novel adversarial attack framework that enhances transferability via two key strategies: (1) \textit{Feature-Sensitive Gradient Scaling (FSGS)}, which modulates gradients based on token-wise importance derived from intermediate feature activations, and (2) \textit{Spectral Smoothness Regularization (SSR)}, which suppresses high-frequency noise in perturbations using a differentiable Gaussian prior. These components work in tandem to generate perturbations that are both semantically meaningful and spectrally smooth. Extensive experiments on ImageNet across 12 diverse architectures demonstrate that TESSER achieves +10.9\% higher attack succes rate (ASR) on CNNs and +7.2\% on ViTs compared to the state-of-the-art Adaptive Token Tuning (ATT) method. Moreover, TESSER significantly improves robustness against defended models, achieving 53.55\% ASR on adversarially trained CNNs. Qualitative analysis shows strong alignment between TESSER's perturbations and salient visual regions identified via Grad-CAM, while frequency-domain analysis reveals a 12\% reduction in high-frequency energy, confirming the effectiveness of spectral regularization. %Our results highlight TESSER’s ability to bridge the architectural gap in black-box settings and advance the study of transferable adversarial examples.

\end{abstract}
   
\section{Introduction}
\label{intro}

Deep learning models, particularly Convolutional Neural Networks (CNNs) and Vision Transformers (ViTs), have achieved state-of-the-art performance across a broad spectrum of computer vision tasks~\cite{carion2020end, zhu2021unified, ma2022unified}. Despite this progress, these models remain highly vulnerable to adversarial examples -- carefully crafted perturbations that are imperceptible to humans but cause misclassification~\cite{Goodfellow2014explaining, physGuesmi, Guesmi_2024_CVPR, 10802252}. In safety-critical applications such as autonomous driving and medical imaging, this fragility raises significant security concerns.

Although white-box attacks, where attackers have full access to model parameters, have been extensively studied, black-box settings are more realistic in practice. These are based on the principle of \textit{transferability}, where adversarial examples generated on a surrogate model are expected to fool unseen target models. However, transferability across architectures, especially from ViTs to CNNs or hybrid models, remains limited due to two key challenges: (1) the lack of \textbf{semantic selectivity}, where all tokens are perturbed uniformly without considering their relevance to the model's prediction, and (2) the presence of \textbf{high-frequency noise} in perturbations, which tends to encode brittle, model-specific artifacts that do not generalize well.

Several recent works, such as ATT~\cite{ATT} and TGR~\cite{TGR}, have explored ViT-specific mechanisms for improving transferability by truncating or regularizing gradient flows. However, these approaches either use fixed gradient masks or overlook token-level semantics, leading to suboptimal alignment with transferable visual features. In this paper, we introduce \textbf{TESSER} (\textit{Transfer-Enhancing Semantic and Spectral Regularization}) a novel adversarial attack framework specifically designed to improve black-box transferability from ViT-based models to a diverse set of architectures. TESSER integrates two complementary strategies:

\noindent \textit{\textbf{Feature-Sensitive Gradient Scaling (FSGS)}}: a token-level gradient modulation method that scales gradients based on token importance derived from intermediate embeddings. Inspired by recent findings correlating token activation magnitudes with semantic relevance~\cite{kobayashi2020attention, wu2024token, globenc2022quantifying}, FSGS steers the attack toward semantically meaningful regions and away from background or non-informative tokens, enhancing cross-model generalization.

\noindent \textit{\textbf{Spectral Smoothness Regularization (SSR)}}: a lightweight regularization mechanism that applies a differentiable Gaussian blur during each optimization step. SSR suppresses high-frequency noise, promoting low-frequency perturbations that are more resilient across architectures, particularly beneficial when transferring to CNNs and adversarially trained models.

Together, these modules enable TESSER to produce perturbations that are semantically aligned and spectrally smooth, two characteristics that we empirically demonstrate to be critical for enhancing transferability in adversarial attacks. %Our extensive evaluations across ViTs, CNNs, and hybrid models confirm the superiority of TESSER over state-of-the-art transfer-based attacks.

% \paragraph{Contributions.} 
Our main contributions are summarized as follows:

\begin{itemize}
    \item We propose \textbf{TESSER}, a novel adversarial attack framework that combines semantic- and spectral-aware regularization to improve transferability from ViTs.
    
    \item We introduce Feature-Sensitive Gradient Scaling (FSGS), which reweights gradients for Attention, QKV, and MLP modules based on token-level importance, encouraging semantically aligned perturbations.
    
    \item We incorporate Spectral Smoothness Regularization (SSR) to reduce high-frequency noise and enhance cross-architecture generalization.
    
    \item We conduct extensive experiments on ImageNet across 12 diverse models (including ViTs, CNNs, and adversarially defended CNNs), demonstrating that TESSER achieves up to \textbf{+10.9\%} higher ASR over state-of-the-art baselines and consistently outperforms existing attacks in both black-box and robust scenarios.
    
    \item We conduct comprehensive ablation studies, Grad-CAM-based semantic alignment evaluations (Section~\ref{Semantics}), and frequency-domain analyses (Section~\ref{FD_analysis}) to demonstrate both the effectiveness and interpretability of our approach.

\end{itemize}

\begin{figure}
  \centering
  \includegraphics[width=\linewidth]{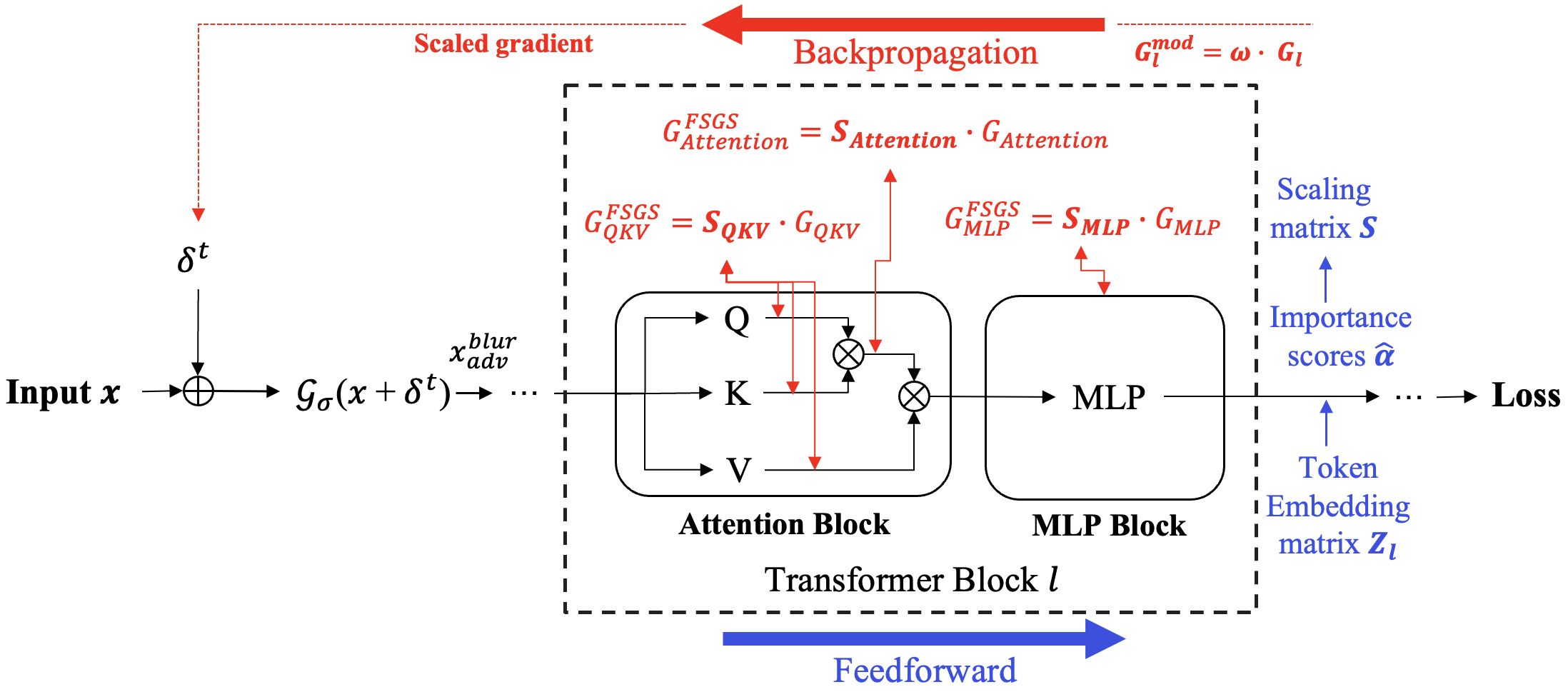}
  \caption{\textbf{Overview of the TESSER attack framework.} At each iteration, an adversarial perturbation $\delta^t$ is applied to the input image and smoothed via differentiable Gaussian blur $\mathcal{G}_\sigma(\cdot)$ to enforce spectral smoothness (SSR). The perturbed input is passed through the transformer, where token embeddings $Z_l$ from each layer are used to compute token-wise importance scores $\hat{\alpha}$, which in turn define gradient scaling masks $S$. During backpropagation, gradients for the Attention, QKV, and MLP modules are reweighted according to their respective scaling masks ($S_{\text{Attention}}, S_{\text{QKV}}, S_{\text{MLP}}$) using Feature-Sensitive Gradient Scaling (FSGS). This encourages perturbations to align with semantically meaningful and transferable features while suppressing noise and irrelevant gradients.}
\end{figure}
%Overview of the TESSER attack framework. At each iteration, adversarial perturbation δₜ is updated via PGD and passed through a differentiable Gaussian blur (SSR) to enforce spectral smoothness. The perturbed input is processed by the transformer, where intermediate token embeddings Zl are used to compute token-wise importance scores. These scores define per-token gradient scaling masks for the Attention, QKV, and MLP modules via Feature-Sensitive Gradient Scaling (FSGS). Layer-wise attenuation is also applied to suppress unstable deep-layer gradients. The combined effect encourages semantically meaningful and transferable perturbations while reducing noisy, model-specific artifacts.

% To enhance perturbation effectiveness and transferability in Vision Transformers (ViTs) by developing a more token-aware, lightweight, and adaptive adversarial attack, specifically

% Propose to adaptively rescale token gradients based on per-token semantic importance, instead of only based on gradient magnitude or variance, to preserve essential features and suppress noise, boosting adversarial transferability.
\section{Related Work}
\label{related_work}

% TGR and ATT may overfit the ViT architecture.

% less transferable to hybrid architectures, etc..

\noindent\textbf{Adversarial Attacks on CNNs and ViTs.}
Adversarial attacks are small, human-imperceptible perturbations intentionally added to input data to mislead deep learning models~\cite{Goodfellow2014explaining}. For Convolutional Neural Networks (CNNs), numerous gradient-based attacks have been proposed to improve transferability, including momentum-based methods~\cite{dong2018boosting}, variance tuning~\cite{huang2019enhancing}, and gradient skipping techniques~\cite{wu2020skip}. These methods aim to stabilize perturbation updates and avoid local optima in the input space.
However, attack techniques designed for CNNs do not transfer well to Vision Transformers (ViTs), which have fundamentally different architectures and information flow patterns. Recent works have proposed ViT-specific attacks that exploit token structure and attention mechanisms~\cite{naseer2021improving, PNA}. For example, Token Gradient Regularization (TGR)~\cite{TGR} modifies intermediate-layer gradients to reduce token-wise variance, improving transferability within ViT families. %PatchOut~\cite{PNA} introduces input diversity by randomly dropping perturbation patches during optimization.

% \noindent\textbf{Gradient Modulation and Feature-Aware Attacks.}
Regularizing gradients is an effective way to suppress model-specific patterns and improve cross-model generalization. In CNNs, methods like SGM~\cite{wu2020skip} and BPA~\cite{xiaosen2023rethinking} aim to manipulate the gradient flow through skip connections or rectify distortions introduced by nonlinearities. Others have employed gradient variance reduction~\cite{huang2019enhancing} and ensemble-based tuning~\cite{xiong2022stochastic}.
Attacks based on feature information~\cite{wang2021feature, ganeshan2019fda} focus on disrupting salient internal representations. However, improperly guided feature-based attacks risk discarding useful information and reducing transferability. To mitigate this, neuron attribution methods~\cite{TGR} and attention map diversification~\cite{ren2025improving} have been explored, particularly in ViTs.
ATT~\cite{ATT} introduces hybrid token gradient truncation by weakening gradients in attention and QKV blocks across layers of a ViT model. It leverages empirical observations of gradient variance to suppress high-magnitude gradients associated with overfitting, thereby improving transferability. However, ATT applies static truncation and does not explicitly consider token-level semantic relevance, which may limit its effectiveness when generalizing across diverse architectures. In contrast, our method introduces \textit{Feature-Sensitive Gradient Scaling (FSGS)}, which adaptively reweights gradients at a token level based on feature norms. This allows us to preserve semantically important gradients while suppressing noisy or architecture-specific ones, achieving improved transferability across ViTs, hybrids, and CNNs.

\noindent\textbf{Input Diversity and Spectral Regularization.} 
Input diversity has been widely adopted to improve adversarial transferability. DI-FGSM~\cite{inspo_patchout} applies random resizing and padding, while PatchOut~\cite{PNA} discards patch-wise perturbations to prevent overfitting. Recent self-paced extensions further refine patch discarding based on semantic guidance~\cite{ATT}. While these approaches diversify the spatial patterns of inputs, few works address the frequency structure of perturbations. Our method incorporates \textit{Spectral Smoothness Regularization (SSR)} by applying differentiable Gaussian blur during optimization. SSR suppresses high-frequency noise and promotes smooth perturbation patterns that generalize better across model architectures, particularly important for CNNs and early ViT layers that rely on localized features.
\textbf{Importantly, input diversity is orthogonal to our method}, and can potentially be combined with TESSER for further gains. We provide additional results and analysis combining input diversity with our framework in the Appendix \ref{appx:additional_experiments}. 

% In summary, our work uniquely combines per-module feature-aware gradient scaling and frequency-aware regularization into a unified framework, achieving robust transferability across architectural families.

% \textcolor{red}{add that input diversity is orthogonal to our method and further results are presented in appendix}

% Input diversity has been widely adopted in prior works~\cite{xie2019improving, dong2019evading} to improve transferability by introducing randomized transformations (e.g., resizing, padding) during attack optimization. However, we observe that such transformations can introduce spatial artifacts and stochasticity that interfere with structured gradient modulation in transformer blocks. Unlike those approaches, our Spectral Smoothness Regularization (SSR) provides a deterministic and differentiable way to promote low-frequency perturbations by applying Gaussian blur during each attack iteration.
\section{Methodology}
\label{methodology}

\subsection{Preliminaries}

% Let \( x \in \mathbb{R}^{C \times H \times W} \) denote a clean input image with ground-truth label \( y \in \{1, \dots, K\} \). The objective of an untargeted adversarial attack is to craft a perturbation \( \delta \), bounded by \( \|\delta\|_{\infty} \leq \epsilon \), such that the perturbed input \( x^{\text{adv}} = x + \delta \) leads the model \( f(\cdot) \) to misclassify: \( f(x^{\text{adv}}) \neq y \).

% We aim to improve the \textit{transferability} of adversarial perturbations across diverse neural architectures, including Vision Transformers (ViTs), hybrid models, and Convolutional Neural Networks (CNNs). To achieve this, we propose a novel adversarial attack framework integrating:

% \begin{itemize}
%     \item \textbf{Feature-Sensitive Gradient Scaling (FSGS)} — our core contribution that adaptively modulates gradients based on token-wise feature importance across QKV, attention, and MLP blocks.
%     \item \textbf{Spectral Smoothness Regularization (SSR)} — a complementary mechanism that applies Gaussian blur during attack optimization to suppress high-frequency perturbation artifacts.
%     \item \textbf{Layer-wise Gradient Modulation} — to adaptively attenuate gradients in deeper transformer layers and avoid overfitting to model-specific attention patterns.
% \end{itemize}

Let \( \mathbf{x} \in \mathbb{R}^{C \times H \times W} \) denote an input image with ground-truth label \( y \in \{1, 2, \dots, K\} \), and let \( f(\cdot) \) be a deep neural network classifier. The goal of an untargeted adversarial attack is to generate a perturbation \( \delta \) such that the perturbed input \( \mathbf{x}^{\text{adv}} = \mathbf{x} + \delta \) is misclassified by the model, i.e., \( f(\mathbf{x}^{\text{adv}}) \neq y \), while ensuring that \( \|\delta\|_{\infty} \leq \epsilon \).
% \paragraph{Vision Transformers.} 
Unlike CNNs that process local image regions hierarchically, Vision Transformers~\cite{dosovitskiy2020image} operate on a sequence of non-overlapping image patches. Given an input image \( \mathbf{x} \), it is partitioned into \( N = \frac{HW}{P^2} \) patches of size \( P \times P \), each linearly projected to a \( D \)-dimensional embedding, resulting in tokens \( \{\mathbf{z}_1, \dots, \mathbf{z}_N\} \subset \mathbb{R}^D \). A learnable classification token \( \mathbf{z}_{\text{cls}} \) is prepended, yielding a token sequence \( \mathbf{Z}^{(0)} \in \mathbb{R}^{(N+1) \times D} \), which is enriched with positional encodings. ViTs consist of a stack of \( L \) transformer blocks. Each block contains a Multi-Head Self-Attention (MHSA) module and a Multi-Layer Perceptron (MLP) module, connected via residual connections and layer normalization (LN). %The \( l \)-th transformer block performs the following operations: $\tilde{\mathbf{Z}}^{(l)} = \mathbf{Z}^{(l)} + \text{Attention}^{(l)}\left( \text{LN}(\mathbf{Z}^{(l)}) \right)$, $ \mathbf{Z}^{(l+1)} = \tilde{\mathbf{Z}}^{(l)} + \text{MLP}^{(l)}\left( \text{LN}(\tilde{\mathbf{Z}}^{(l)}) \right)$.

The effectiveness of an attack depends on how well the crafted perturbation generalizes across architectures. In this work, we design an attack method that targets the attention, QKV, and MLP modules explicitly, and introduces gradient- and frequency-based regularization strategies to improve transferability across ViTs, hybrids, and CNNs.

\subsection{Feature-Sensitive Gradient Scaling (FSGS)}

To improve transferability, we propose \textit{Feature-Sensitive Gradient Scaling (FSGS)}, a fine-grained gradient modulation strategy that steers adversarial updates toward semantically relevant tokens while suppressing gradients associated with model-specific or noisy patterns. Unlike prior methods such as ATT~\cite{ATT} and TGR~\cite{TGR}, which rely on fixed truncation or uniform regularization, FSGS leverages intermediate transformer features to dynamically adjust gradient flow on a per-token basis.

\noindent\textbf{Limitations of Prior Gradient Modulation Approaches.}
ATT weakens gradients across transformer modules based on empirical variance, but applies static masks that may disregard salient tokens. TGR promotes token-wise gradient uniformity without regard for token semantics, leading to potentially ineffective or redundant updates. In contrast, FSGS introduces adaptive scaling conditioned on the importance of each token, measured directly from the model’s internal activations. This content-aware reweighting enhances the alignment of perturbations with generalizable visual features and improves cross-architecture transfer.

\noindent\textbf{Why Does Token Activation Norm Reflect Saliency?}
In Vision Transformers, token activation norm has been empirically associated with semantic saliency. Prior works—including Kobayashi et al. \cite{kobayashi2020attention}, Modarressi et al. \cite{modarressi2022globenc}, and Wu et al. \cite{wu2024token}—demonstrate that tokens with higher embedding norms tend to correspond to class-relevant features and foreground objects. These studies show that token norms correlate with saliency and attention relevance, and can serve as a reliable proxy for identifying object-centric or discriminative regions. Our own Grad-CAM visualizations (Section \ref{Semantics}) reinforce this observation: tokens with high activation norm consistently align with semantically meaningful image regions, such as objects or salient parts. This empirical alignment justifies using token norm as a saliency prior for guiding gradient modulation in our FSGS. From a theoretical perspective (see Appendix \ref{appx:theory}), we argue that if high-norm tokens represent salient features that are reused across models, then modulating gradients based on token norm enhances the alignment between surrogate and target model gradients. This, in turn, increases the likelihood that the generated perturbations will transfer effectively across architectures.

\noindent\textbf{Does alignment correlate with better transfer?}
Our assumption is that alignment between gradients and semantically meaningful features is critical for enhancing attack effectiveness and transferability. This is based on both theoretical rationale and empirical observations. Specifically, in our FSGS design, token importance is treated as a proxy for semantic relevance. Tokens with high importance often correspond to salient regions such as foreground objects or class-discriminative parts. Amplifying gradients from these tokens strengthens perturbations that disrupt the model’s core semantic understanding, leading to higher attack success and better cross-model transferability. This idea is rooted in the observation that semantically aligned perturbations are more likely to generalize across architectures, which often differ in their treatment of early, low-level features but converge on similar high-level semantics. As noted by Goodfellow et al. \cite{Goodfellow2014explaining}, adversarial vulnerability is closely tied to the alignment of gradients with semantically relevant directions. While originally discussed in the context of CNNs, this principle extends to Vision Transformers, where deeper layers increasingly represent abstract, transferable concepts \cite{raghu2021vision, bhojanapalli2021understanding}.
% To validate this, our Grad-CAM-based visualizations (Section \ref{Semantics}) show that FSGS-guided attacks consistently focus on semantically meaningful regions, even when inducing misclassifications. This visual alignment is not merely aesthetic, it empirically correlates with improved black-box transfer performance.

% In summary, alignment improves performance because:
% \begin{itemize}
%     \item It targets semantically important features, which are shared across models.
%     \item It increases the likelihood of misclassification in both white-box and black-box settings.
%     \item It avoids wasting gradient energy on less informative or architecture-specific tokens.
% \end{itemize}
FSGS operationalizes this principle by scaling gradients according to token importance scores, thus enforcing alignment between the perturbation direction and the image’s semantic structure.

\noindent\textbf{Why Feature-Sensitive Gradient Scaling (FSGS)?}
The goal of FSGS is to amplify gradients associated with semantically important tokens (typically those with high activations norms) and suppress less informative gradients that contribute less to the model's prediction. However, in ViTs, early layers tend to capture low-level, architecture-dependent features (e.g., local textures, positional cues), which are not semantically aligned or transferable across models. Hence, using the raw importance score $(\alpha)$ to scale gradients in these layers would unintentionally amplify gradients from less transferable features. To address this, we adopt a dual-stage strategy: in early layers, we scale gradients using $(1 - \alpha)$ to suppress noisy or low-level gradients, while in deeper, semantically richer layers, we use $\alpha$ to enhance gradients aligned with class-discriminative feature. This approach is motivated by established findings in the literature. Raghu et al. \cite{raghu2021vision} note that early ViT layers extract general representations that diverge from CNNs, reducing transferability. Bhojanapalli et al. \cite{bhojanapalli2021understanding} highlight that semantic robustness correlates with deeper layers, whereas early representations are more noisy and less class-relevant. Kim et al. \cite{kim2024exploring} further observe that early-layer attention maps are more architecture sensitive and less effective for cross-model transfer.

% To support this empirically, we compute Spearman correlations between token feature norms and gradient norms across layers. We observe that intermediate/deeper layers exhibit moderate to strong correlation, indicating that feature norm is meaningful proxy for semantic importance -- validating its use for gradient scaling. In summary, the use of $(1 - \alpha)$ in early layers is a deliberate design choice to suppress architecture-dependent noise, while $\alpha$ in deeper layers helps concentrate gradient energy on semantically meaningful tokens. This weighting improves transferability and aligns with theoretical and empirical insights into ViT representation structure.

\noindent\textbf{Token-Level Importance Estimation.}
Given a token embedding matrix \( \mathbf{Z} \in \mathbb{R}^{T \times D} \), we estimate the importance of token \( i \) using the activation norm \( \alpha_i = \| \mathbf{z}_i \|_2 \), which serves as a proxy for semantic saliency. This assumption is supported by prior work in both NLP and vision~\cite{kobayashi2020attention, wu2024token, globenc2022quantifying}, which shows that activation magnitudes often correlate with token informativeness or attention saliency. For instance, Kobayashi et al.~\cite{kobayashi2020attention} and Wang et al.~\cite{globenc2022quantifying} argue that vector norms contribute substantially to a token’s influence, while Wu et al.~\cite{wu2024token} highlight the role of transformed token magnitudes in ViT explanations. These scores are min-max normalized:
\[
\hat{\alpha}_i = \frac{\alpha_i - \min_j \alpha_j}{\max_j \alpha_j - \min_j \alpha_j + \varepsilon}
\]
where \( \varepsilon \) ensures numerical stability.

\noindent\textbf{Gradient Reweighting.}
Each token’s gradient is modulated by a scaling factor:
% \[
% s_i = \gamma_{\text{base}} + \lambda (1 - \hat{\alpha}_i), \quad \mathbf{g}_i^{\text{FSGS}} = s_i \cdot \mathbf{g}_i
% \]

Let $ l \in \{1, \dots, L\} $ denote the index of the current transformer block, and let $ \mathcal{E} \subset \{1, \dots, L\} $ be the set of early layers (e.g., $ \mathcal{E} = \{1, \dots, k\} $). Define an indicator function:
\begin{equation}
\beta^{(l)} = 
\begin{cases}
1 & \text{if } l \in \mathcal{E} \quad \text{(early layer)} \\
0 & \text{otherwise}
\end{cases}
\end{equation}
The final scaling factor for token $ i $ at layer $ l $ is then computed as:
$s_i^{(l)} = \gamma_{\text{base}} + \lambda \cdot \left[ (1 - \beta^{(l)}) \cdot \hat{\alpha}_i + \beta^{(l)} \cdot (1 - \hat{\alpha}_i) \right]$. And the FSGS-modulated gradient is: $ \mathbf{g}_i^{(l), \text{FSGS}} = s_i^{(l)} \cdot \mathbf{g}_i^{(l)} $

Here, \( \gamma_{\text{base}} \in (0, 1] \) ensures minimum gradient flow, while \( \lambda \) controls the suppression strength for less important tokens. This reweighting selectively amplifies gradients linked to semantically meaningful content. FSGS is applied independently to the QKV projections, attention weights, and MLP layers, using module-specific hyperparameters \( \lambda_{\text{qkv}}, \lambda_{\text{attn}}, \lambda_{\text{mlp}} \), allowing tailored control over each component. FSGS is implemented via backward hooks, imposes negligible overhead, and integrates seamlessly with iterative attack frameworks. By aligning perturbations with high-importance regions, it enhances the semantic coherence and transferability of adversarial examples across both homogeneous and heterogeneous architectures.

\noindent\textbf{Semantic Gradient Preservation.}
In Vision Transformers, each token contributes to the final prediction through self-attention, MLP transformations, and CLS-token pooling. The gradient of the loss with respect to a token’s embedding indicates how much that token influences the model’s output. FSGS leverages the activation norm of intermediate token embeddings as a proxy for semantic importance: Higher activation norms typically correspond to visually meaningful content such as object boundaries, textures, or salient regions~\cite{long2023beyond, wu2024token}. By preserving gradients associated with these high-importance tokens and suppressing those from less informative or background areas, FSGS steers adversarial perturbations toward features that are more likely to generalize across architectures. This selective modulation avoids wasting the attack budget on noisy or model-specific components, thereby improving transferability.

To validate this design intuition, we compute Grad-CAM \cite{selvaraju2017grad} heatmaps using the incorrect (adversarial) predictions of the black-box model and analyze their spatial alignment with perturbations generated by FSGS. Interestingly, even though the model misclassifies the input, the resulting Grad-CAM activations still highlight semantically meaningful regions (e.g., object boundaries or foreground), suggesting that FSGS preserves gradients aligned with true semantic features. This supports our hypothesis that activation norm correlates with token importance and that FSGS steers perturbations toward generalizable, transferable regions. See Section~\ref{Semantics} for qualitative visualizations and Appendix~\ref{appx:theory} for further analysis.

%%%%%%%%%%%%%%%%%%%%%%%%%%%%%%%%%%%%%%%%%%%%%%%%%%%%%%%%%%%%%%
\subsection{Spectral Smoothness Regularization (SSR)}

In addition to feature-aware gradient modulation, we introduce \textit{Spectral Smoothness Regularization (SSR)} to suppress high-frequency perturbation artifacts that often limit cross-architecture transferability. SSR operates by applying a differentiable Gaussian blur to the adversarial input at each iteration, effectively enforcing a low-pass filter on the evolving perturbation. Let \( \delta \) denote the adversarial perturbation and \( \mathcal{G}_\sigma(\cdot) \) the Gaussian blur operator with standard deviation \( \sigma \). The input passed to the model becomes: $\mathbf{x}_{adv}^{\text{blur}} = \mathcal{G}_\sigma(\mathbf{x} + \delta)$.
The intuition behind SSR is grounded in observations from both signal processing and adversarial transferability literature: perturbations dominated by high-frequency components tend to overfit to surrogate model-specific features and fail to generalize across architectures~\cite{Tsipras2019Robustness, yin2019fourier}. By enforcing spectral smoothness, we bias the perturbation towards lower frequencies, which are more likely to align with perceptually salient and transferable features.

Unlike input diversity methods~\cite{inspo_patchout} that rely on stochastic resizing or padding to enhance transferability, SSR deterministically constrains the spectral characteristics of the perturbation itself. It is also fundamentally different from smoothing-based defenses, as the blur is applied \textit{during optimization} rather than at test time, making it an attack-side regularizer.
SSR is computationally efficient, introduces no additional parameters, and is compatible with any gradient-based attack pipeline. In our implementation, we integrate SSR into each PGD iteration following the perturbation update step. As shown in our experiments, SSR synergizes with FSGS to improve attack success rates, especially in black-box settings and when transferring across model families with divergent inductive biases.

\subsection{Module-Wise Gradient Modulation}

Vision Transformers differ from CNNs not only in architecture but also in how features and gradients evolve with depth. Prior studies~\cite{ATT, yosinski2014transferable, naseer2021improving} have shown that deeper transformer layers tend to encode more specialized, model-specific patterns (particularly in the attention maps) which can harm the transferability of adversarial perturbations. To address this, we introduce a \textit{Module-wise gradient modulation} strategy that suppresses unstable gradients in deep attention layers and softly attenuates the gradient flow in all modules (Attention, QKV, MLP) based on their layer depth. Inspired by ATT \cite{ATT}, our approach consists of two key components:

\noindent\textbf{Selective Attention Truncation.} We truncate the gradients flowing through the \textit{Attention module} for deep transformer blocks beyond a fixed threshold \( l_{\text{cut}} \), by setting their attention gradients to zero:
$\mathbf{g}_l^{\text{attn}} \gets \mathbb{1}_{[l < l_{\text{cut}}]} \cdot \mathbf{g}_l^{\text{attn}}$.
This effectively disables attention backpropagation in deeper layers, mitigating overfitting to model-specific global patterns.

\noindent\textbf{Module-Wise Gradient Weakening.} For all layers \( l \in \{1, \dots, L\} \) and modules \( m \in \{\text{attn}, \text{qkv}, \text{mlp}\} \), we scale the gradients using a module-specific weakening factor \( \omega^{(m)} \in (0, 1] \): $\mathbf{g}_l^{(m)} \gets \omega^{(m)} \cdot \mathbf{g}_l^{(m)}$. This softly adjusts the contribution of each module based on its depth and functional role, before applying further refinement via FSGS.
The weakening factors \( \omega^{(l)}_m \) and the truncation layer threshold \( l_{\text{cut}} \) are predefined based on empirical sensitivity, further hyperparameter sensitivity studies are provided in Appendix \ref{appx:additional_ablation}.
   
All gradient weakening and truncation operations are applied via backward hooks before the application of FSGS. This ordering ensures that noisy gradients are first suppressed or removed, and only the semantically meaningful signals are preserved and amplified by FSGS. Importantly, our method remains fully differentiable and does not alter the model’s forward pass, preserving compatibility with any transformer backbone. The overall optimization algorithm and different hyper-parameters for training adversarial example are provided in Appendix \ref{appx:algo}.

% We support the scheduling schemes for assigning \( \omega^{(l)}_m \) and \( \tau_l \) attenuation across layers:

% \noindent\textbf{Static Schedule:} 

% \noindent \textbf{Dynamic Schedule:} At attack step \( t \in \{1, \dots, T\} \), a progressive schedule gradually enables deeper layers:
%     \[
%     \tau_l^{(t)} =
%     \begin{cases}
%     1 & \text{if } l \leq L_t \\
%     0 & \text{otherwise}
%     \end{cases}, \quad L_t = \left\lfloor L \cdot \frac{t}{T} \right\rfloor
%     \]
%     This encourages early iterations to exploit transferable shallow features, with deeper layers introduced later for refinement.
% This layered approach enables TESSER to balance between suppressing architecture-specific artifacts and preserving transferable, semantically aligned perturbation directions.
% Although our framework supports both static and dynamic scheduling of layer-wise modulation, all results in this work use static truncation schedules tailored per model.
\section{Experiments}
\label{experiment}

\subsection{Experiment Setup}
\label{setup}
\noindent\textbf{Dataset.}  
Following prior works~\cite{PNA, TGR, ATT}, we randomly selected 1,000 clean images from the ILSVRC2012 validation set~\cite{russakovsky2015imagenet}, ensuring that all surrogate models correctly classify each image with high confidence. This selection facilitates a consistent and fair evaluation of transferability across models.

\noindent\textbf{Models.}  
We employ four representative Vision Transformer models as surrogate architectures: ViT-B/16~\cite{vit_b_16}, PiT-B~\cite{pit_b}, CaiT-S24~\cite{caits_24}, and Visformer-S~\cite{visformer_s}. To assess cross-architecture generalization, we group evaluation into two categories: ViT-to-ViT and ViT-to-CNN transfer. For ViT-to-ViT, we use four unseen target ViTs: DeiT-B~\cite{deit_b}, TNT-S~\cite{tnt_s}, LeViT-256~\cite{levit_256}, and ConViT-B~\cite{convit_b}. For ViT-to-CNN, we evaluate against four deep CNN models: Inception-v3 (Inc-v3), Inception-v4 (Inc-v4), Inception-ResNet-v2 (IncRes-v2), and ResNet-v2-152 (Res-v2)~\cite{inc_v3, inc_v4, res_v2}. Additionally, to evaluate robustness against adversarial defenses, we include three adversarially trained models: Inc-v3-ens3, Inc-v4-ens4, and IncRes-v2-adv~\cite{madry2017towards, xu2022a2}.

\noindent\textbf{Baselines.}  
We compare our method against a suite of strong baseline attacks. These include momentum- and variance-based methods such as MI-FGSM (MIM)~\cite{mim}, VMI-FGSM (VMI)~\cite{vmi}, and Skip Gradient Method (SGM)~\cite{sgm}. We also include three state-of-the-art transformer-specific attacks: PNA~\cite{PNA}, TGR~\cite{TGR}, and ATT~\cite{ATT}, which incorporate attention structure or token-level heuristics into their gradient manipulation strategies. Our proposed method builds on this foundation by integrating token-sensitive gradient reweighting and spectral regularization.

\noindent\textbf{Evaluation Metrics.}  
We evaluate attack performance using the standard \textit{Attack Success Rate} (ASR), defined as the proportion of adversarial examples that successfully fool the target model. Higher ASR (\( \uparrow \)) indicates stronger transferability. %Additionally, we measure attack efficiency by recording the number of iterations \( t \) required to achieve the first misclassification, under a fixed iteration budget \( T \). Lower values of \( t \) (\( \downarrow \)) imply faster convergence and greater optimization efficiency.

\noindent\textbf{Parameter Settings.}  
All experiments use a maximum perturbation bound of \( \epsilon = 16/255 \), consistent with prior work~\cite{TGR}. The number of PGD iterations is set to \( T = 10 \), with a step size of \( \eta = \epsilon / T = 1.6/255 \). Momentum is used for stabilization with decay factor \( \mu = 1.0 \). Model- and method-specific hyperparameters follow their original settings unless otherwise stated. Input images are resized to \( 224 \times 224 \), and the patch size for transformer models is fixed at \( 16 \times 16 \).
% In our method, the penultimate transformer block (i.e., \( l = L - 1 \)) is used to extract token features for gradient scaling. 
For spectral smoothness regularization, we apply Gaussian blur with fixed kernel size \( (3 \times 3)\) and \( \sigma = 0.5 \). %For experiments involving adaptive variance control, 
We set \( \gamma_{\text{base}} = 0.5 \). The weakening factors \( \omega \), layer truncation threshold \( l_{\text{cut}} \), and the adaptive scaling factor to \( \lambda\) are tuned per model to balance the influence of QKV, Attention, and MLP gradients within the backward pass. The specific values of these hyperparameters are provided in Appendix~\ref{appx:algo}.

\subsection{Evaluating the Transferability}

% We evaluate the transferability of our proposed TESSER attack by comparing its average attack success rate (ASR) against a range of target models from different architectural families. Table~\ref{tab:vit} reports the ASR when attacking various ViTs and using ViT-based surrogate models. TESSER significantly outperforms existing methods across all settings. Notably, TESSER achieves an average ASR of 86.88\% on ViTs and 74.4\% on CNNs, surpassing the best baseline (ATT) by +7.2\% and +10.9\%, respectively. This demonstrates that our spectral and semantic regularization techniques yield perturbations that generalize better across both homogeneous and heterogeneous model families.

% Table~\ref{tab:cnn} further evaluates robustness against adversarially trained and defended CNNs. These models are significantly harder to attack due to adversarial training. Nevertheless, TESSER maintains the highest ASR of 53.55\%, outperforming both gradient-based and transformer-specific baselines such as TGR and PNA. This highlights the robustness of our method in black-box scenarios, especially when facing defended models, and confirms that FSGS and SSR not only improve transferability but also enhance the ability to penetrate robust defenses. Further transferability results are provided in Appendix \ref{appx:additional_experiments}.
% These results collectively affirm that TESSER generates highly transferable perturbations by focusing gradient updates on semantically important tokens and suppressing non-generalizable, high-frequency noise.

We evaluate the black-box transferability of adversarial examples generated by TESSER across ViTs, CNNs, and adversarially defended CNNs. Table~\ref{tab:vit} shows results when attacking ViTs using ViT-based surrogates. TESSER achieves an average ASR of \textbf{86.88\%}, outperforming the strongest baseline (ATT) by \textbf{+7.2\%}. %This gain can be attributed to our FSGS dynamically preserves gradients corresponding to semantically important tokens, enhancing both spatial and semantic alignment of perturbations. Additionally, SSR suppresses high-frequency components that are known to reduce cross-layer generalization, further stabilizing the attack signals across ViT hierarchies. 
On CNN targets, where ViT-based attacks typically degrade, TESSER maintains strong performance with \textbf{74.4\%} ASR \textbf{+10.9\%} higher than ATT. This indicates that our semantic and frequency-aware perturbations generalize beyond transformer-specific structures. TESSER’s improvements are particularly notable on hybrid architectures like LeViT and ConViT, where both spatial alignment and cross-attention modeling are critical.

% In summary, TESSER effectively bridges the transfer gap between ViTs and CNNs, demonstrating that semantic gradient targeting and spectral regularization are powerful tools for generating robust and transferable adversarial examples.

\begin{table}[htbp]
\centering
\caption{The attack success rate (\%) of various transfer-based attacks against eight ViT models and the average attack success rate (\%) of all black-box models. The best results are highlighted in \textbf{bold}.}
\resizebox{0.9\textwidth}{!}{%
\begin{tabular}{llccccccccc}
\toprule
\textbf{Model} & \textbf{Attack} & \textbf{ViT-B/16} & \textbf{PiT-B} & \textbf{CaiT-S/24} & \textbf{Visformer-S} & \textbf{DeiT-B} & \textbf{TNT-S} & \textbf{LeViT-256} & \textbf{ConViT-B} & \textbf{Avg$_{bb}$} \\
\midrule
\multirow{6}{*}{ViT-B/16} 
& MIM & \textbf{100.0*} & 34.5 & 64.1 & 36.5 & 64.3 & 50.2 & 33.8 & 66.0 & 49.9 \\
& VMI & \textbf{99.6*} & 48.8 & 74.4 & 49.5 & 73.0 & 64.8 & 50.3 & 75.9 & 62.4 \\
& SGM & \textbf{100.0*} & 36.9 & 77.1 & 40.1 & 77.9 & 61.6 & 40.2 & 78.4 & 58.9 \\
& PNA & \textbf{100.0*} & 45.2 & 78.6 & 47.7 & 78.6 & 62.8 & 47.1 & 79.5 & 62.8 \\
& TGR & \textbf{100.0*} & 49.5 & 85.0 & 53.8 & 85.6 & 73.1 & 56.5 & 85.4 & 69.8 \\
& ATT & \textbf{99.9*} & 57.5 & 90.3 & 63.9 & 90.8 & 82.0 & 66.8 & 90.8 & 77.4 \\
& Ours & \textbf{100*} & \textbf{61.7} & \textbf{94} & \textbf{68.3} & \textbf{92.5} & \textbf{85.6} & \textbf{72.2} & \textbf{91.4} &  \textbf{83.2$\uparrow$} \\
\midrule
\multirow{6}{*}{PiT-B} 
& MIM & 24.7 & \textbf{100.0*} & 34.7 & 44.5 & 33.9 & 43.0 & 38.3 & 37.8 & 36.7 \\
& VMI & 38.9 & \textbf{99.7*} & 51.0 & 56.6 & 50.1 & 57.0 & 52.6 & 51.7 & 51.1 \\
& SGM & 41.8 & \textbf{100.0*} & 57.3 & 73.9 & 57.9 & 72.6 & 68.1 & 59.9 & 61.6 \\
& PNA & 47.9 & \textbf{100.0*} & 62.6 & 74.6 & 62.4 & 70.6 & 67.3 & 61.7 & 63.9 \\
& TGR & 60.3 & \textbf{100.0*} & 80.2 & 87.3 & 78.0 & 87.1 & 81.6 & 76.5 & 78.7 \\
& ATT & 69.6 & \textbf{100.0*} & 86.1 & 91.9 & 85.5 & 93.5 & 89.0 & 85.5 & 85.9 \\
& Ours & \textbf{74.9} & \textbf{100.0*} & \textbf{91.6} & \textbf{93.2} & \textbf{92.1} & \textbf{95} & \textbf{92.4} & \textbf{91.7} & \textbf{91.4$\uparrow$} \\
\midrule
\multirow{6}{*}{CaiT-S/24} 
& MIM & 70.9 & 54.8 & \textbf{99.8*} & 55.1 & 90.2 & 76.4 & 54.8 & 88.5 & 70.1 \\
& VMI & 76.3 & 63.6 & \textbf{98.8*} & 67.3 & 88.5 & 82.3 & 67.0 & 88.1 & 76.2 \\
& SGM & 86.0 & 55.8 & \textbf{100.0*} & 68.2 & 97.7 & 91.1 & 74.9 & 96.7 & 81.5 \\
& PNA & 82.4 & 60.7 & \textbf{99.7*} & 67.7 & 95.7 & 86.9 & 67.1 & 94.0 & 79.2 \\
& TGR & 88.2 & 66.1 & \textbf{100.0*} & 75.4 & 98.8 & 92.8 & 74.7 & 97.9 & 84.8 \\
& ATT & 93.6 & 76.4 & \textbf{100.0*} & 85.9 & 99.4 & 96.9 & 87.4 & 98.8 & 91.2 \\

& Ours & 95.2  & 81.4 & \textbf{100*} & 90.3 & 99.6 & 97.5 & 90.7 & 98.9 &  \textbf{94.2$\uparrow$} \\
\midrule
\multirow{6}{*}{Visformer-S} 
& MIM & 28.1 & 50.4 & 41.0 & \textbf{99.9*} & 36.9 & 51.9 & 49.4 & 39.6 & 42.5 \\
& VMI & 39.2 & 60.0 & 56.6 & \textbf{100.0*} & 54.1 & 62.8 & 59.1 & 54.4 & 55.2 \\
& SGM & 18.8 & 41.8 & 34.9 & \textbf{100.0*} & 31.2 & 52.1 & 52.7 & 29.5 & 37.3 \\
& PNA & 35.4 & 61.5 & 54.7 & \textbf{100.0*} & 51.0 & 66.3 & 64.5 & 50.7 & 54.9 \\
& TGR & 41.2 & 70.3 & 62.0 & \textbf{100.0*} & 59.5 & 74.7 & 74.8 & 56.2 & 62.7 \\
& ATT & 44.7 & 70.9 & 68.7 & \textbf{100.0*} & 66.4 & 78.8 & 80.9 & 58.4 & 67.0 \\
& Ours & \textbf{57.6} & \textbf{79.4} & \textbf{78.4} & \textbf{100.0*} & \textbf{75.9} & \textbf{83.2} & \textbf{85.3} & \textbf{69.6} & \textbf{78.7$\uparrow$} \\
\bottomrule
\label{tab:vit}
\end{tabular}
}
\end{table}
When facing adversarially trained CNNs (Table~\ref{tab:cnn}), TESSER achieves \textbf{53.55\%} ASR, surpassing all baselines by a large margin. This suggests that TESSER generates perturbations that are not only transferable but also robust against strong defenses, an essential property for real-world attack scenarios.
We also observe that the relative gains of TESSER vary across target types. For ViTs, the gains are moderate, likely because transformer-specific methods already perform reasonably well in this setting. However, the improvement is more pronounced on CNNs and defended CNNs, where ATT and TGR degrade significantly. This asymmetry suggests that our method is particularly effective at bridging the architectural gap between transformer and non-transformer models. Furthermore, TESSER’s performance is more stable across all target types, showing lower variance than competing methods, which reinforces the robustness of our approach. Additional results and extended analysis are provided in Appendix~\ref{appx:additional_experiments}.
% \begin{table*}[t]
\begin{wraptable}{l}{8cm}
\centering
\captionsetup{type=table}
\captionof{table}{The attack success rate (\%) of various transfer-based attacks against robust ViTs. The best results are highlighted in \textbf{bold}).}
\resizebox{0.5\textwidth}{!}{%

\begin{tabular}{llcccc}

\toprule
\multirow{2}{*}{Model} & \multirow{2}{*}{Attack} & \multicolumn{2}{c}{\textbf{Robust ViTs}} & \multicolumn{2}{c}{\textbf{Normal ViTs}} \\
\cline{3-6}
&  & Swin-B & Xcit-S  & Swin-B & Xcit-S \\
\midrule
& clean         & 5.4 & 46.8  & 0.4 & 0.2 \\
\multirow{4}{*}{ViT-B/16} 
& PNA+PO        & 8.8 & 51.7  & 47.5 & 45.5 \\
& TGR+PO        & 15.8 & 56.5  & 54.4 & 54.5 \\
& ATT+SPPO     & 16.9 & 56.7  & 70.4 & 68.6 \\
& TESSER     & \textbf{29.7}\textuparrow & \textbf{70.8}\textuparrow &  \textbf{99.9}\textuparrow & \textbf{77.9}\textuparrow \\ %74.7
\midrule

\multirow{4}{*}{PiT-B} 
& PNA+PO        & 9.2 & 51.8 & 67.0 & 71.2 \\
& TGR+PO        & 17.9 & 58.2  & 77.3 & 80.7 \\
& ATT+SPPO     & 18.7 & 58.3  & 90.4 & 92.8 \\
& TESSER       & \textbf{31.9}\textuparrow & \textbf{71.6}\textuparrow &  \textbf{100}\textuparrow & \textbf{95.4}\textuparrow \\

\bottomrule
\label{tab:attack_comparison_vits}
\end{tabular}}
% \end{table}
\end{wraptable}
%=============================================================
% \noindent\textbf{Evaluating the Transferability of Different Attack Methods on Robust ViT Models.}
We conducted additional experiments on robust ViT models trained via adversarial training with epsilon = 4, including Swin-B \cite{mo2022adversarial} and XCiT-S \cite{debenedetti2023light}. We compared TESSER against state-of-the-art attacks (PNA+PO, TGR+PO, and ATT+SPPO) using their optimal hyperparameters. As shown in Table \ref{tab:attack_comparison_vits}, TESSER consistently achieves the highest ASR on both robust and corresponding standard ViT models, confirming its strong effectiveness even under adversarial defense settings. These results demonstrate that TESSER’s transferability extends to robust ViTs, not just CNNs and hybrids.

\subsection{Ablation on Module-Wise Gradient Modulation}

To understand the individual and combined contributions of our gradient modulation strategy across different transformer modules, we conduct an ablation study by selectively applying Feature-Sensitive Gradient Scaling to the Attention, QKV, and MLP components. Table~\ref{tab:module-ablation} presents the attack success rates (ASR) on ViT-based models, CNNs, and defended CNNs under different configurations. When FSGS is applied to a single module, the Attention pathway contributes the most to transferability, particularly for ViTs, achieving an ASR of 80.1\%. MLP-only and QKV-only configurations also yield strong improvements over the baseline, with notable gains on CNNs and defended models. Combining any two modules improves performance further, especially when including MLP, which significantly boosts ASR against robust models. The best results are obtained when FSGS is jointly applied to all three modules, yielding an ASR of 86.88\% on ViTs and 53.55\% on defended CNNs. These results confirm that our gradient modulation strategy is most effective when applied in a comprehensive and module-aware manner.

\begin{table}[htbp]
\centering
\caption{The attack success rate (\%) of various transfer-based attacks against four undefended CNN models and three defended CNN models and the average attack success rate (\%) of all black-box models. The best results are highlighted in \textbf{bold}.}
\resizebox{0.9\textwidth}{!}{%
\begin{tabular}{llcccccccccc}
\toprule
\textbf{Model} & \textbf{Attack} & \textbf{Inc-v3} & \textbf{Inc-v4} & \textbf{IncRes-v2} & \textbf{Res-v2} & \textbf{Inc-v3ens3} & \textbf{Inc-v3ens4} & \textbf{IncRes-v2adv} & \textbf{Avg$_{bb}$} \\
\midrule
\multirow{6}{*}{ViT-B/16}
& MIM & 31.7 & 28.6 & 26.1 & 29.4 & 22.3 & 19.8 & 16.5 & 24.9 \\
& VMI & 43.1 & 41.6 & 37.9 & 42.6 & 31.4 & 30.6 & 25.0 & 36.0 \\
& SGM & 31.5 & 27.7 & 23.8 & 28.2 & 20.8 & 18.0 & 14.3 & 23.5 \\
& PNA & 42.7 & 37.5 & 35.3 & 39.5 & 29.0 & 27.3 & 22.6 & 33.4 \\
& TGR & 47.5 & 42.3 & 37.6 & 43.3 & 31.5 & 30.8 & 25.6 & 36.9 \\
& ATT & 53.3 & 49.0 & 45.4 & 51.5 & 38.1 & 36.7 & 33.1 & 43.9 \\
& Ours & \textbf{63.4} & \textbf{59.6} & \textbf{54.4} & \textbf{57.7} & \textbf{48.6} & \textbf{49} & \textbf{42.3} &  \textbf{53.6$\uparrow$} \\
\midrule
\multirow{6}{*}{PiT-B}
& MIM & 36.3 & 34.8 & 27.4 & 29.6 & 19.0 & 18.3 & 14.1 & 25.6 \\
& VMI & 47.3 & 45.4 & 40.7 & 43.4 & 35.9 & 34.4 & 29.7 & 39.5 \\
& SGM & 50.6 & 45.4 & 38.4 & 41.9 & 25.6 & 20.8 & 16.7 & 34.2 \\
& PNA & 59.3 & 56.3 & 49.8 & 53.0 & 33.3 & 32.0 & 25.5 & 44.2 \\
& TGR & 72.1 & 69.8 & 65.1 & 64.8 & 43.6 & 41.5 & 32.8 & 55.7 \\
& ATT & 80.4 & 75.3 & 72.7 & 72.9 & 52.5 & 50.6 & 41.0 & 63.6 \\
& Ours & \textbf{87.2} & \textbf{87.5} & \textbf{78.4} & \textbf{80} & \textbf{61} & \textbf{61.3} & \textbf{48.9} & \textbf{72$\uparrow$} \\
\midrule
\multirow{6}{*}{CaiT-S/24}
& MIM & 48.4 & 42.9 & 39.5 & 43.8 & 30.8 & 27.6 & 23.3 & 36.6 \\
& VMI & 58.5 & 50.9 & 48.2 & 52.0 & 38.1 & 36.1 & 30.1 & 44.8 \\
& SGM & 53.5 & 45.9 & 40.2 & 45.9 & 30.8 & 28.5 & 21.0 & 38.0 \\
& PNA & 57.2 & 51.8 & 47.7 & 51.6 & 38.4 & 36.2 & 30.1 & 44.7 \\
& TGR & 60.3 & 52.9 & 49.3 & 53.4 & 39.6 & 37.0 & 31.8 & 46.3 \\
& ATT & 73.9 & 66.0 & 66.3 & 66.4 & 54.6 & 52.1 & 43.9 & 60.5 \\
& Ours & \textbf{79.2} & \textbf{71.9} & \textbf{72} & \textbf{72.4} & \textbf{57.9} & \textbf{57.5} & \textbf{49.2} &  \textbf{65.7$\uparrow$} \\
\midrule
\multirow{6}{*}{Visformer-S}
& MIM & 44.5 & 42.5 & 36.6 & 39.6 & 24.4 & 20.5 & 16.6 & 32.1 \\
& VMI & 54.6 & 53.2 & 48.5 & 52.2 & 33.0 & 32.0 & 22.2 & 42.2 \\
& SGM & 43.2 & 41.1 & 29.6 & 35.7 & 16.1 & 13.0 & 8.2 & 26.7 \\
& PNA & 55.9 & 54.6 & 46.0 & 51.7 & 29.3 & 26.2 & 21.1 & 40.7 \\
& TGR & 65.9 & 66.8 & 55.3 & 60.9 & 36.0 & 32.5 & 23.3 & 48.7 \\
& ATT & 80.9 & 81.2 & 70.5 & 75.7 & 50.1 & 41.3 & 32.0 & 61.7 \\
& Ours & \textbf{84.2} & \textbf{84.6} & \textbf{77.3} & \textbf{80.6} & \textbf{64.6} & \textbf{57.4} & \textbf{45} &  \textbf{70.5$\uparrow$} \\
\bottomrule
\label{tab:cnn}
\end{tabular}
}
\end{table}
\subsection{Qualitative Comparison: Perturbation Semantics} 
\label{Semantics}

To further investigate the semantic alignment of perturbations generated by our method, we visualize adversarial examples produced by ATT~\cite{ATT} and our proposed FSGS. Each example includes the clean image, the adversarial image, and the corresponding Grad-CAM heatmap computed from the adversarial input using the predicted (incorrect) label of the black-box model. This setup allows us to examine whether the adversarial signal still localizes on meaningful regions of the input, despite successfully inducing misclassification. As shown in Figure~\ref{fig:qual-fft}, perturbations generated by FSGS exhibit high spatial alignment with semantically salient areas (often focusing on regions corresponding to object parts or discriminative textures) even when the model's prediction is incorrect. This behavior suggests that FSGS preserves gradients associated with tokens that encode true class-relevant features. The resulting perturbations remain semantically grounded, in contrast to ATT, which tends to spread noise across the image without semantic focus.

Notably, this qualitative evidence also validates the central assumption underlying FSGS: that token activation norms correlate with semantic importance. Even when guided by the wrong prediction, the Grad-CAM heatmaps highlight regions that overlap with the object of interest (e.g., the bird), confirming that high-activation tokens contribute to visually meaningful features. By preserving such gradients, FSGS steers perturbations toward content that generalizes across architectures, improving both interpretability and black-box transferability. %In addition, the incorporation of SSR further suppresses high-frequency noise and promotes smoother perturbations, which contributes to visual coherence and enhances robustness across models. Together, FSGS and SSR enable adversarial examples that are not only more transferable but also more interpretable.
%=============================================================

\noindent\textbf{Evaluating TESSER Perceptual Stealthiness.}
While SSR reduces high-frequency components, it operates only during the perturbation generation phase. The final adversarial image is obtained by subtracting the noise and clamping the result. Thus, no direct blurring is applied to the image itself, preserving spatial clarity. To objectively assess perceptual visibility, we provide a quantitative comparison using LPIPS, SSIM, and PSNR across TESSER and transfer-based attacks such as ATT and TGR. As shown in Table \ref{tab:stealth-eval}, TESSER achieves significantly higher imperceptibility, with ~50\% reduction in LPIPS, 33\% improvement in SSIM, and +5 dB increase in PSNR, demonstrating strong stealthiness without sacrificing effectiveness.

\begin{table}[t]
\centering
\small
\caption{Stealth Evaluation of Transfer-Based Attacks.}
\label{tab:stealth-eval}
\begin{tabular}{lccc l}
\toprule
\textbf{Metric} & \textbf{TGR} & \textbf{ATT} & \textbf{TESSER} & \textbf{Better (Stealthier)} \\
\midrule
\textbf{LPIPS} $\downarrow$ & 0.35 & 0.42 & \textbf{0.21} & TESSER (closer in perceptual similarity) \\
\textbf{SSIM}  $\uparrow$   & 0.66 & 0.57 & \textbf{0.77} & TESSER (better structure preservation) \\
\textbf{PSNR}  $\uparrow$   & 22.23\,dB & 19.70\,dB & \textbf{25.04\,dB} & TESSER (less signal distortion) \\
\bottomrule
\end{tabular}
\end{table}

% \begin{table}[]
\begin{wraptable}{l}{7cm}
\centering
\small
\captionsetup{type=table}
\captionof{table}{The average attack success rate (\%) against ViTs, CNNs, and defended CNNs by our method with different module settings.}
\resizebox{0.4\textwidth}{!}{%
\begin{tabular}{cccccc}
\toprule
\textbf{Attn} & \textbf{QKV} & \textbf{MLP} & \textbf{ViTs} & \textbf{CNNs} & \textbf{Def-CNNs} \\
\midrule
-- & -- & -- & 49.9 & 29.1 & 19.3 \\
\checkmark & -- & -- & 80.1 & 61.1 & 36.1 \\
-- & \checkmark & -- & 72.72 & 54.1 & 29.5 \\
-- & -- & \checkmark & 71.87 & 59 & 36 \\
\checkmark & \checkmark & -- & 78.43 & 55.4 & 30.3 \\
\checkmark & -- & \checkmark & 83.32 & 70.9 & 52 \\
-- & \checkmark & \checkmark & 81.21 & 66.3 &  39.9 \\
\checkmark & \checkmark & \checkmark & 86.88 & 74.4 & 53.55 \\
\bottomrule
\label{tab:module-ablation}
\end{tabular}}
% \end{table}
\end{wraptable}
% \vspace{-8mm}

\begin{figure}[t]
    \centering
    \includegraphics[width=\linewidth]{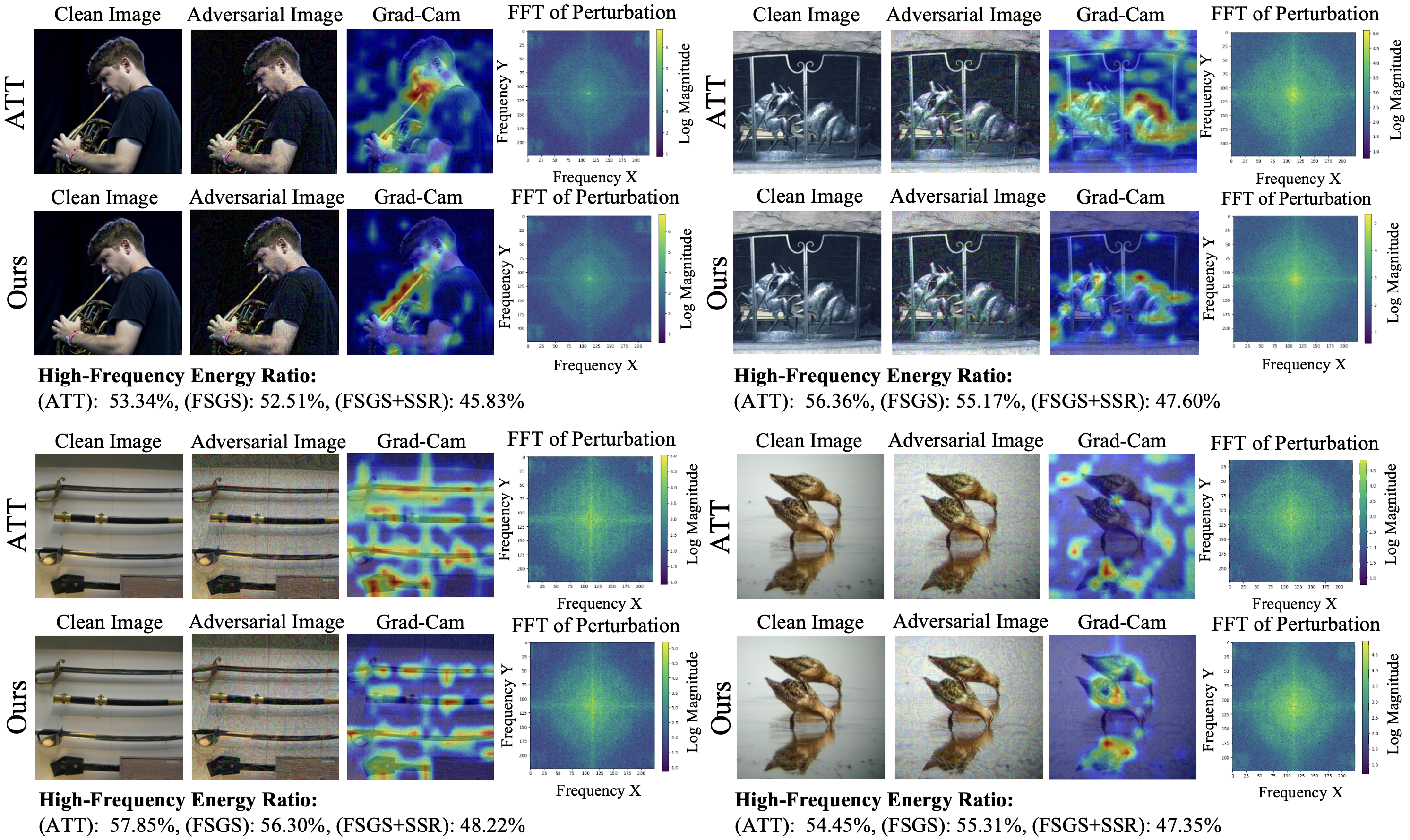}
    \caption{
        \textbf{Qualitative and frequency-domain comparison between ATT and our method (FSGS and FSGS + SSR).}
        Each row shows clean images, adversarial examples when using FSGS, Grad-CAM (guided by the adversarial label) overlays, and FFT log-magnitude spectra when using SSR.
        Our method produces perturbations that better align with semantically relevant regions and exhibit smoother frequency profiles. Further results and analysis are provided in Appendix \ref{appx:additional_ablation}.
    }
    \label{fig:qual-fft}
\end{figure}

\subsection{Spectral Smoothness Evaluation via Frequency-Domain Analysis}
\label{FD_analysis}
To quantitatively assess the effect of Spectral Smoothness Regularization (SSR), we conduct a frequency-domain analysis of the generated perturbations. Specifically, we compute the 2D Fast Fourier Transform (FFT) of each perturbation and evaluate the \emph{high-frequency energy ratio}, defined as the proportion of energy outside the central low-frequency band in the log-magnitude spectrum (as shown in Figure \ref{fig:qual-fft}). Given a perturbation \( \delta \in \mathbb{R}^{3 \times H \times W} \), we compute its FFT, shift the spectrum to center the low frequencies, and apply a radial mask to isolate high-frequency components. This experiment is repeated on a batch of adversarial samples to compare the spectral concentration of different attack variants. Our results demonstrate that SSR substantially reduces the high-frequency energy of perturbations. Across examples, ATT shows the highest high-frequency ratios (e.g., 53-56\%), while FSGS reduces this moderately ($\sim$52–55\%). When combined with SSR, the high-frequency ratio drops further (to $\sim$45–47\%), indicating smoother and more transferable perturbations. This confirms that SSR encourages low-frequency perturbation structure, complementing the token-aware gradient modulation of FSGS.

\section{Conclusion}
We proposed TESSER, a unified adversarial attack framework designed to improve transferability across diverse model architectures. By integrating Feature-Sensitive Gradient Scaling (FSGS) and Spectral Smoothness Regularization (SSR), TESSER guides adversarial gradients through semantically meaningful token activations and enforces smooth, low-frequency perturbation structures. Combined with layer and  module-wise gradient modulation, our method effectively mitigates overfitting to model-specific representations and enhances generalization to unseen targets. Experimental results across a wide range of ViTs, hybrid models, and CNNs demonstrate that TESSER consistently outperforms state-of-the-art transfer attacks in both accuracy degradation and optimization efficiency. %Our work highlights the benefits of incorporating both semantic awareness and spectral regularization into the adversarial optimization process, and provides a flexible foundation for future research in transferable and physically realizable attacks.

\section{Limitations and Broader Impact} %Limitations and Broader Impacts}
\label{BI}

Although our experimental results validate the effectiveness of the proposed Feature-Sensitive Gradient Scaling (FSGS) and Spectral Smoothness Regularization (SSR) in enhancing the transferability of ViT-based adversarial attacks, the underlying relationship between gradient sensitivity and transferability still lacks formal theoretical backing. Prior studies suggest that perturbations aligned with generalizable features tend to transfer better across models. Our work operationalizes this intuition by amplifying token importance and promoting gradient coherence to preserve semantically relevant signals, thereby improving black-box attack performance. In future work, we aim to explore this connection from a theoretical perspective to deepen our understanding of adversarial generalization.

This work aims to improve the evaluation of adversarial robustness by developing more transferable attacks across architectures. By revealing shared vulnerabilities in ViTs, CNNs, and hybrid models, our method can support the design of more generalizable defenses. However, improved transferability may also lower the barrier for black-box attacks. We emphasize that our approach is intended for academic use in robustness research, and encourage its application in developing safer, more resilient AI systems.

\begin{ack}
% Use unnumbered first level headings for the acknowledgments. All acknowledgments
% go at the end of the paper before the list of references. Moreover, you are required to declare
% funding (financial activities supporting the submitted work) and competing interests (related financial activities outside the submitted work).
% More information about this disclosure can be found at: \url{https://neurips.cc/Conferences/2025/PaperInformation/FundingDisclosure}.

% Do {\bf not} include this section in the anonymized submission, only in the final paper. You can use the \texttt{ack} environment provided in the style file to automatically hide this section in the anonymized submission.
This research was partially funded by the Technology Innovation Institute (TII) under the ``CASTLE: Cross-Layer Security for Machine Learning Systems IoT" project.
\end{ack}

%%%%%%%%%%%%%%%%%%%%%%%%%%%%%%%%%%%%%%%%%%%%%%%%%%%%%%%%%%%%

% \section*{References}

% References follow the acknowledgments in the camera-ready paper. Use unnumbered first-level heading for
% the references. Any choice of citation style is acceptable as long as you are
% consistent. It is permissible to reduce the font size to \verb+small+ (9 point)
% when listing the references.
% Note that the Reference section does not count towards the page limit.
\medskip

{
    \small
    % \bibliographystyle{ieeenat_fullname}
    % \bibliography{neurips_2025}
    \bibliographystyle{plainnat}
    \bibliography{neurips_2025}
}

% {
% \small

% [1] Alexander, J.A.\ \& Mozer, M.C.\ (1995) Template-based algorithms for
% connectionist rule extraction. In G.\ Tesauro, D.S.\ Touretzky and T.K.\ Leen
% (eds.), {\it Advances in Neural Information Processing Systems 7},
% pp.\ 609--616. Cambridge, MA: MIT Press.

% [2] Bower, J.M.\ \& Beeman, D.\ (1995) {\it The Book of GENESIS: Exploring
%   Realistic Neural Models with the GEneral NEural SImulation System.}  New York:
% TELOS/Springer--Verlag.

% [3] Hasselmo, M.E., Schnell, E.\ \& Barkai, E.\ (1995) Dynamics of learning and
% recall at excitatory recurrent synapses and cholinergic modulation in rat
% hippocampal region CA3. {\it Journal of Neuroscience} {\bf 15}(7):5249-5262.
% }

\newpage
%%%%%%%%%%%%%%%%%%%%%%%%%%%%%%%%%%%%%%%%%%%%%%%%%%%%%%%%%%%%
\appendix
\section*{Appendix}

\section{Theoretical Justification of Feature-Sensitive Gradient Scaling (FSGS)}
\label{appx:theory}

We provide a formal argument to support the hypothesis that modulating gradients based on token-level activation norms enhances adversarial transferability. Our analysis is grounded in the relationship between semantic informativeness and gradient alignment across models.

\subsection{Preliminaries}

Let $f : \mathbb{R}^d \rightarrow \mathbb{R}^K$ be a surrogate classifier and $f' : \mathbb{R}^d \rightarrow \mathbb{R}^K$ a target (black-box) classifier. An adversarial perturbation $\delta \in \mathbb{R}^d$ is added to input $\mathbf{x}$ such that $\|\delta\|_\infty \leq \epsilon$ and $f(\mathbf{x} + \delta) \ne y$.

Assume $\mathbf{x}$ is decomposed into $T$ tokens with embeddings $\mathbf{z}_1, \dots, \mathbf{z}_T \in \mathbb{R}^D$. Denote the loss gradients w.r.t. each token as $\mathbf{g}_i = \nabla_{\mathbf{z}_i} \mathcal{L}(f(\mathbf{x}), y)$, and similarly for $f'$.

\subsection{Semantic Tokens and Gradient Alignment}

Let $S_{\text{sem}} \subseteq \{1, \dots, T\}$ be the set of semantically informative tokens (e.g., foreground object, discriminative parts). Let $S_{\text{bg}}$ be its complement (background or irrelevant tokens).

We define the inter-model gradient alignment at token $i$ as:
\[
\text{Align}_i = \cos\theta_i = \frac{\langle \nabla_{\mathbf{z}_i} \mathcal{L}_f, \nabla_{\mathbf{z}_i} \mathcal{L}_{f'} \rangle}{\|\nabla_{\mathbf{z}_i} \mathcal{L}_f\| \cdot \|\nabla_{\mathbf{z}_i} \mathcal{L}_{f'}\|}
\]

\begin{assumption}
\label{assump:semantic_alignment}
Gradients at semantically important tokens exhibit higher cross-model alignment:
\[
\mathbb{E}_{i \in S_{\text{sem}}}[\text{Align}_i] > \mathbb{E}_{i \in S_{\text{bg}}}[\text{Align}_i]
\]
\end{assumption}

This is supported by empirical findings in model interpretability~\cite{abnar2020quantifying, lin2016leveraging, raghu2021vision} and our own Grad-CAM visualizations (see Section~\ref{Semantics}).

\subsection{Feature-Sensitive Gradient Scaling (FSGS)}

FSGS assigns a scaling factor $s_i$ to each token based on its activation norm $\alpha_i = \|\mathbf{z}_i\|_2$:
\[
s_i = \gamma_{\text{base}} + \lambda(1 - \hat{\alpha}_i), \quad \hat{\alpha}_i = \frac{\alpha_i - \min_j \alpha_j}{\max_j \alpha_j - \min_j \alpha_j + \varepsilon}
\]

Tokens with high $\alpha_i$ (assumed to lie in $S_{\text{sem}}$) receive larger gradients, while low-importance tokens are suppressed.

\begin{theorem}[FSGS Improves Expected Gradient Alignment]
\label{thm:fsgs_alignment}
Let $G = \sum_{i=1}^{T} \mathbf{g}_i$ be the unscaled gradient and $G_{\text{FSGS}} = \sum_{i=1}^{T} s_i \cdot \mathbf{g}_i$ the FSGS-scaled gradient. Under Assumption~\ref{assump:semantic_alignment}, the cosine alignment between $G_{\text{FSGS}}$ and the target model’s gradient $G'$ satisfies:
\[
\cos\theta(G_{\text{FSGS}}, G') > \cos\theta(G, G')
\]
\end{theorem}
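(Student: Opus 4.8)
The plan is to reduce the vector-level cosine comparison to a scalar statement about the correlation, across tokens, between the FSGS scaling factors $s_i$ and the per-token alignments $\text{Align}_i$. First I would expand both cosines through the per-token decomposition $G = \sum_i \mathbf{g}_i$, $G_{\text{FSGS}} = \sum_i s_i \mathbf{g}_i$, and $G' = \sum_i \mathbf{g}_i'$. The inner products then contain cross terms $\langle \mathbf{g}_i, \mathbf{g}_j'\rangle$ with $i \ne j$; the key simplifying assumption I would adopt is token-wise gradient decorrelation, i.e.\ $\langle \mathbf{g}_i, \mathbf{g}_j'\rangle \approx 0$ and $\langle \mathbf{g}_i, \mathbf{g}_j\rangle \approx 0$ for $i \ne j$, so that each inner product collapses to a diagonal sum. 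Writing $m_i = \|\mathbf{g}_i\|^2$ and using $\langle \mathbf{g}_i, \mathbf{g}_i'\rangle = \|\mathbf{g}_i\|\,\|\mathbf{g}_i'\|\,\text{Align}_i$, and assuming comparable surrogate/target gradient magnitudes $\|\mathbf{g}_i'\| \approx \|\mathbf{g}_i\|$, both cosines share the common factor $\|G'\|$, which cancels in the comparison.

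After cancellation the claim $\cos\theta(G_{\text{FSGS}}, G') > \cos\theta(G, G')$ becomes the inequality
\[
\frac{\sum_i s_i\, m_i\, \text{Align}_i}{\sqrt{\sum_i s_i^2\, m_i}} > \frac{\sum_i m_i\, \text{Align}_i}{\sqrt{\sum_i m_i}},
\]
which, after the substitution $\tilde s_i = s_i \sqrt{m_i}$, $\tilde a_i = \text{Align}_i \sqrt{m_i}$, $\tilde u_i = \sqrt{m_i}$, is exactly the statement $\cos(\tilde{\mathbf{s}}, \tilde{\mathbf{a}}) > \cos(\tilde{\mathbf{u}}, \tilde{\mathbf{a}})$ for vectors in token space. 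In words, replacing the uniform weight vector by the importance-weighted vector $\mathbf{s}$ rotates the aggregated gradient toward the high-alignment directions.

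Next I would make the rotation quantitative. Since the cosine is scale-invariant, I can absorb $\gamma_{\text{base}}$ and write the scaling path $s_i(\lambda') \propto 1 + \lambda' \hat\alpha_i$, recovering uniform weights at $\lambda' = 0$. Differentiating $\phi(\lambda') = \cos(\mathbf{s}(\lambda'), \mathbf{a})$ (in the $m$-weighted inner product) at $\lambda' = 0$, the standard quotient-rule computation yields, up to a positive normalization factor, $\phi'(0) \propto \mathrm{Cov}_m(\hat\alpha, \text{Align})$, the covariance of token importance and token alignment taken under the measure $m_i$. Assumption~\ref{assump:semantic_alignment} (together with the norm--saliency hypothesis that $\hat\alpha_i$ is large precisely on $S_{\text{sem}}$) states that semantic tokens carry both high importance and high cross-model alignment, so this covariance is strictly positive; hence $\phi'(0) > 0$ and the cosine strictly increases as FSGS reweighting is switched on.

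The main obstacle --- and the place I would be most careful --- is twofold. The token-decorrelation assumption that kills the off-diagonal $\langle \mathbf{g}_i, \mathbf{g}_j'\rangle$ terms is a genuine modeling simplification; without it the numerators carry cross-token couplings that are not controlled by Assumption~\ref{assump:semantic_alignment} alone, and a fully rigorous treatment would need a bound on the off-diagonal gradient Gram matrix. Second, the covariance argument only certifies an infinitesimal increase at $\lambda' = 0$; because $\phi(\lambda')$ is maximized when $\mathbf{s} \parallel \mathbf{a}$ and decreases once $\mathbf{s}$ overshoots that direction, the finite-$\lambda$ statement must be restricted to the admissible range in which reweighting moves $\mathbf{s}$ toward, and not past, $\mathbf{a}$. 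I would therefore present the positive-covariance derivative as the core mechanism and obtain the stated strict inequality by monotonicity of $\phi$ on that admissible interval.
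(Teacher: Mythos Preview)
Your proposal is a valid heuristic argument and is, in fact, considerably more careful than the paper's own proof sketch, but it follows a genuinely different route. The paper simply partitions the tokens into $S_{\text{sem}}$ and $S_{\text{bg}}$, observes that FSGS assigns larger $s_i$ to the first bucket, and then asserts the cosine inequality ``by Jensen's inequality over positively weighted aligned vectors''---with no diagonalization of the Gram matrix, no derivative along $\lambda$, and no explicit statement of which convex function Jensen is being applied to. Your approach instead (i) introduces the token-decorrelation and magnitude-comparability assumptions to collapse the vector cosine to a scalar weighted-average comparison, and (ii) replaces the two-bucket Jensen step by a first-order perturbation argument showing $\phi'(0)\propto \mathrm{Cov}_m(\hat\alpha,\text{Align})>0$. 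What the paper's version buys is brevity and no auxiliary assumptions beyond Assumption~\ref{assump:semantic_alignment} (at the cost of rigor, since the Jensen invocation is not made precise); what your version buys is an honest accounting of the extra structure one actually needs---cross-token gradient decorrelation, comparable $\|\mathbf{g}_i\|\approx\|\mathbf{g}_i'\|$, and the $\hat\alpha$--saliency link---together with a clean mechanism (positive covariance) and an explicit caveat about the finite-$\lambda$ range. Both arguments are heuristic rather than fully rigorous, and you correctly flag the two places where yours would require further work.
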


\begin{proof}[Sketch]
We decompose the total gradient into two subsets:
\[
G = \sum_{i \in S_{\text{sem}}} \mathbf{g}_i + \sum_{i \in S_{\text{bg}}} \mathbf{g}_i
\]

FSGS scales $i \in S_{\text{sem}}$ by higher $s_i$ than those in $S_{\text{bg}}$, thus:
\[
G_{\text{FSGS}} = \sum_{i \in S_{\text{sem}}} s_i \mathbf{g}_i + \sum_{i \in S_{\text{bg}}} s_i \mathbf{g}_i
\]

Since $\mathbb{E}_{i \in S_{\text{sem}}}[\text{Align}_i] > \mathbb{E}_{i \in S_{\text{bg}}}[\text{Align}_i]$, amplifying contributions from $S_{\text{sem}}$ increases the expected alignment between $G_{\text{FSGS}}$ and $G'$. Therefore:
\[
    \cos\theta(G_{\text{FSGS}}, G') > \cos\theta(G, G') \quad \text{(by Jensen's inequality over positively weighted aligned vectors)}
\]
\end{proof}

\subsection{Implication}

Theorem~\ref{thm:fsgs_alignment} provides theoretical support for the design of FSGS: boosting gradients from semantically salient tokens leads to improved alignment with gradients from unseen models, thereby enhancing adversarial transferability. This also explains the empirical advantage of FSGS in black-box settings, particularly when transferring from ViTs to CNNs or hybrid models.

\section{The Overall Framework of Optimization Algorithm}
%=============================================================
\label{appx:algo}

\subsection{Algorithm}
We now present the full optimization framework used to generate adversarial examples in our method. Our algorithm builds on the momentum-based PGD attack~\cite{dong2018boosting}, and integrates three coordinated components: (1) \textit{Module and Layer-Wise Gradient Modulation} to adjust the contribution of each transformer layer and suppress noisy deep-layer gradients, (2) \textit{Feature-Sensitive Gradient Scaling (FSGS)} to selectively enhance semantically important token gradients, and (3) \textit{Spectral Smoothness Regularization (SSR)} to constrain the perturbation’s frequency content.

Let \( \mathbf{x} \in \mathbb{R}^{C \times H \times W} \) be a clean input, \( y \in \{1, \dots, K\} \) its ground-truth label, and \( f \) the surrogate model. We seek a perturbation \( \delta \) satisfying \( \|\delta\|_\infty \leq \epsilon \), such that the adversarial input \( \mathbf{x}^{\text{adv}} = \mathbf{x} + \delta \) misleads \( f \) and transfers effectively to other black-box models.

\subsection*{Optimization Procedure}

The perturbation is optimized over \( T \) steps using projected gradient descent with momentum. At each step \( t \in \{1, \dots, T\} \), the perturbed input is smoothed using a differentiable Gaussian blur operator:
\[
\mathbf{x}^{(t)} = \mathcal{G}_\sigma(\mathbf{x} + \delta^{(t-1)})
\]
where \( \mathcal{G}_\sigma(\cdot) \) denotes Gaussian blurring with standard deviation \( \sigma \), enforcing low-frequency spectral structure (SSR).

Next, the gradient of the loss is computed:
\[
\mathbf{g}^{(t)} = \nabla_{\mathbf{x}} \mathcal{L}(f(\mathbf{x}^{(t)}), y)
\]

This gradient is intercepted via backward hooks at key ViT modules (Attention, QKV, MLP). For each transformer block \( l \), the following sequence is applied to each module:

\begin{enumerate}
    \item \textbf{Module-wise Weakening:} The gradient \( \mathbf{g}^{(l)} \) for each module is first scaled using a module-specific weakening factor \( \omega^{(l)} \in (0,1] \) (e.g., \( \omega^{(l)}_{\text{attn}}, \omega^{(l)}_{\text{qkv}}, \omega^{(l)}_{\text{mlp}} \)). This captures prior knowledge about the sensitivity of each module.

    \item \textbf{Layer-wise Modulation:} The weakened attention gradient is then further modulated by a layer-specific coefficient \( \tau_l \in [0,1] \), which reduces the influence of deeper transformer layers:
    \[
    \mathbf{g}^{(l)} \gets \tau_l \cdot (\omega^{(l)} \cdot \mathbf{g}^{(l)})
    \]

    \item \textbf{Feature-Sensitive Gradient Scaling (FSGS):} %Token-level importance \( \hat{\alpha}_i \) is computed from intermediate features, and each token’s gradient is scaled accordingly:
    % \[
    % \mathbf{g}_i^{(l)} \gets \left( \gamma_{\text{base}} + \lambda (1 - \hat{\alpha}_i) \right) \cdot \mathbf{g}_i^{(l)}
    % \]
    % \subsection{Feature-Sensitive Gradient Scaling (FSGS)}

% To enhance transferability, we propose \textit{Feature-Sensitive Gradient Scaling (FSGS)}, 
        A layer-aware gradient modulation mechanism that scales gradients based on token-wise importance. FSGS promotes perturbation alignment with semantically salient features while suppressing low-level, architecture-specific signals that degrade cross-model transferability.
        
        Let $\mathbf{Z} \in \mathbb{R}^{T \times D}$ be the token embedding matrix at a given transformer block. We define the raw importance score of token $i$ as:
        $\alpha_i = \| \mathbf{z}_i \|_2$. The importance scores are min-max normalized across tokens: $\hat{\alpha}_i = \frac{\alpha_i - \min_j \alpha_j}{\max_j \alpha_j - \min_j \alpha_j + \varepsilon}$, where $ \varepsilon $ is a small constant to avoid division by zero.
        
        Let $ l \in \{1, \dots, L\} $ denote the index of the current transformer block, and let $ \mathcal{E} \subset \{1, \dots, L\} $ be the set of early layers (e.g., $ \mathcal{E} = \{1, \dots, k\} $). Define an indicator function:
        \begin{equation}
        \beta^{(l)} = 
        \begin{cases}
        1 & \text{if } l \in \mathcal{E} \quad \text{(early layer)} \\
        0 & \text{otherwise}
        \end{cases}
        \end{equation}
        The final scaling factor for token $ i $ at layer $ l $ is then computed as:
        $s_i^{(l)} = \gamma_{\text{base}} + \lambda \cdot \left[ (1 - \beta^{(l)}) \cdot \hat{\alpha}_i + \beta^{(l)} \cdot (1 - \hat{\alpha}_i) \right]$. And the FSGS-modulated gradient is: $ \mathbf{g}_i^{(l), \text{FSGS}} = s_i^{(l)} \cdot \mathbf{g}_i^{(l)} $

\end{enumerate}

All module gradients are aggregated to form the total input gradient \( \mathbf{g}^{(t)} \), and momentum is applied:
\[
\mathbf{m}^{(t)} = \mu \cdot \mathbf{m}^{(t-1)} + \frac{\mathbf{g}^{(t)}}{\|\mathbf{g}^{(t)}\|_1}
\]

The perturbation is updated and projected onto the \( \ell_\infty \)-norm ball:
\[
\delta^{(t)} = \text{Clip}_{\epsilon} \left( \delta^{(t-1)} + \eta \cdot \text{sign}(\mathbf{m}^{(t)}) \right)
\]

\paragraph{Description.}
Algorithm~\ref{alg:fsgs_ssr} summarizes our full optimization loop. The key innovation lies in the sequential application of module-wise weakening, layer-wise modulation, and semantic-aware scaling via FSGS. All components are implemented via backward hooks, ensuring compatibility with any transformer-based model.

% We provide a detailed pseudo-code for our proposed method: 

\begin{algorithm}%[t]
\caption{\textbf{TESSER: Transfer-Enhancing Adversarial Optimization from Vision Transformers via Spectral and Semantic Regularization}}
\label{alg:fsgs_ssr}

\KwIn{
    Input image \( \mathbf{x} \), label \( y \), model \( f \), \\
    Steps \( T \), step size \( \eta \), perturbation bound \( \epsilon \), \\
    Gaussian blur \( \mathcal{G}_\sigma \), momentum \( \mu \), \\
    Base scale \( \gamma_{\text{base}} \), \\
    Module-specific FSGS strengths \( \lambda_{\text{qkv}}, \lambda_{\text{attn}}, \lambda_{\text{mlp}} \), \\
    Early-layer set \( \mathcal{E} \), attention cutoff layer \( l_{\text{cut}} \), \\
    Module weakening factors \( \omega^{(l)} \), \\
    SSR loss function \( \mathcal{L}_{\text{SSR}} \)
}

\KwOut{Adversarial example \( \mathbf{x}^{\text{adv}} \)}

\textbf{Initialize:} \( \delta^{(0)} = 0 \), \( \mathbf{m}^{(0)} = 0 \)

\For{\( t = 1 \) \KwTo \( T \)}{

    \textbf{1. Apply SSR:} \\
    \( \mathbf{x}^{(t)} = \mathcal{G}_\sigma(\mathbf{x} + \delta^{(t-1)}) \)

    \textbf{2. Forward pass and compute classification loss:} \\
    \( \mathcal{L}_{\text{cls}}^{(t)} = \mathcal{L}(f(\mathbf{x}^{(t)}), y) \)

    \textbf{3. Backward pass with hooks at QKV, Attention, and MLP modules:}

    \ForEach{block \( l \in \{1, \dots, L\} \)}{
        \ForEach{module \( m \in \{\text{qkv}, \text{attn}, \text{mlp}\} \)}{

            \textbf{3.1 Extract token features and gradients:} \\
            \( \mathbf{Z}^{(l, m)} = [\mathbf{z}_1^{(l,m)}, \dots, \mathbf{z}_T^{(l,m)}] \) \\
            \( \mathbf{G}^{(l, m)} = [\mathbf{g}_1^{(l,m)}, \dots, \mathbf{g}_T^{(l,m)}] \)

            \textbf{3.2 Compute token importance:} \\
            \( \alpha_i = \|\mathbf{z}_i^{(l,m)}\|_2 \), \quad
            \( \hat{\alpha}_i = \frac{\alpha_i - \min_j \alpha_j}{\max_j \alpha_j - \min_j \alpha_j + \varepsilon} \)

            \textbf{3.3 Apply module-wise weakening:} \\
            \( \mathbf{G}^{(l,m)} \gets \omega^{(l)} \cdot \mathbf{G}^{(l,m)} \)

            \textbf{3.4 Selective Attention Truncation (only if \( m = \text{attn} \)):} \\
            \If{\( l \geq l_{\text{cut}} \)}{
                \( \mathbf{G}^{(l,\text{attn})} \gets 0 \)
            }

            \textbf{3.5 Compute FSGS scaling:} \\
            \ForEach{token \( i \in \{1, \dots, T\} \)}{
                \If{\( l \in \mathcal{E} \)}{
                    \( s_i = \gamma_{\text{base}} + \lambda_m \cdot (1 - \hat{\alpha}_i) \)
                }
                \Else{
                    \( s_i = \gamma_{\text{base}} + \lambda_m \cdot \hat{\alpha}_i \)
                }
                \( \mathbf{g}_i^{(l,m)} \gets s_i \cdot \mathbf{g}_i^{(l,m)} \)
            }

        }
    }

    \textbf{4. Aggregate gradients across all modules:} \\
    \( \mathbf{g}^{(t)} = \sum_{l,m} \text{Aggregate}(\mathbf{G}^{(l,m)}) \)

    \textbf{5. Momentum update:} \\
    \( \mathbf{m}^{(t)} = \mu \cdot \mathbf{m}^{(t-1)} + \frac{\mathbf{g}^{(t)}}{\|\mathbf{g}^{(t)}\|_1} \)

    \textbf{6. Perturbation update with projection:} \\
    \( \delta^{(t)} = \text{Clip}_{\epsilon}(\delta^{(t-1)} + \eta \cdot \text{sign}(\mathbf{m}^{(t)})) \)
}

\Return \( \mathbf{x}^{\text{adv}} = \mathbf{x} + \delta^{(T)} \)

\end{algorithm}

\paragraph{Hyper-parameters.}
Table~\ref{tab:hyperparams} summarizes the model-specific hyperparameters used in TESSER for each architecture. These include module-wise weakening factors ($\omega^{(\cdot)}$), FSGS scaling parameters ($\lambda_{\cdot}$), spectral smoothness regularization strength ($\sigma$), attention truncation depth ($l_{\text{cut}}$), base gradient scaling ($\gamma_{\text{base}}$), as well as optimization parameters: momentum decay ($\mu$) and step size ($\eta$). Values are carefully selected to balance gradient modulation and attack stability per architecture.

\begin{table}[t]
\centering
\small
\caption{Model-specific hyperparameter settings used for TESSER. $\omega^{(\cdot)}$ denotes the weakening factor for each module, $\lambda_{\cdot}$ is the FSGS scaling parameter, $\sigma$ controls the strength of spectral regularization, $l_{\text{cut}}$ is the attention truncation depth, $\gamma_{\text{base}}$ is the minimum gradient scaling factor, $\mu$ is the momentum decay used in PGD, and $\eta$ is the step size for perturbation update.}
\vspace{0.5em}
\begin{tabular}{ccccc}
\toprule
\textbf{Hyperparameter} & \textbf{ViT-B/16} & \textbf{PiT-B} & \textbf{CaiT-S/24} & \textbf{Visformer-S} \\
\midrule
$\omega^{(\text{attn})}$   & 0.45 & 0.25 & 0.3 & 0.4 \\
$\omega^{(\text{qkv})}$    & 0.5 & 0.5 & 1.0 & 0.8 \\
$\omega^{(\text{mlp})}$    & 0.7 & 0.7 & 0.6 & 0.5 \\
\midrule
$\lambda_{\text{attn}}$    & 0.4 & 0.45 & 0.5 & 0.45 \\
$\lambda_{\text{qkv}}$     & 0.5 & 0.5 & 0.5 & 0.5 \\
$\lambda_{\text{mlp}}$     & 0.55 & 0.55 & 0.65 & 0.6 \\
\midrule
$\sigma$ (SSR)             & 0.5 & 0.7 & 0.7 & 0.7 \\
$l_{\text{cut}}$           & 10   & 9    & 4    & 8 \\
$\gamma_{\text{base}}$     & 0.5 & 0.5 & 0.5 & 0.5 \\
$\mu$                      & 1.0 & 1.0 & 1.0 & 1.0 \\ 
$\eta$                     & 1.6/255 & 1.6/255 & 1.6/255 & 1.6/255\\ 
\bottomrule
\end{tabular}
\label{tab:hyperparams}
\end{table}

\subsection{Computational Cost}
\label{resources}

% Training time and memory overhead for FSGS/SSR

To evaluate the efficiency of our proposed method, we report the average time (in seconds) required to generate a single adversarial example using FSGS, FSGS+SSR, and the ATT across different models. As shown in Table~\ref{tab:computation_cost}, our methods incur minimal overhead compared to ATT, with only a slight increase when applying SSR. In particular, even in deeper architectures like CaiT-S/24, FSGS+SSR remains significantly more efficient than ATT.

\begin{table}[t]
\centering
\small
\caption{Computational cost (in seconds) for generating a single adversarial example across different models and methods. FSGS refers to our feature-sensitive gradient scaling, SSR refers to spectral smoothness regularization, and ATT denotes state of the art.}
\vspace{0.5em}
\begin{tabular}{lccc}
\toprule
\textbf{Model} & \textbf{FSGS} & \textbf{FSGS + SSR} & \textbf{ATT \cite{ATT}} \\
\midrule
ViT-B/16    & 0.5 & 0.52 & 0.93 \\
PiT-B       & 0.54 & 0.6 & 1.05 \\
CaiT-S/24    & 1.24 & 1.27 & 1.88 \\
Visformer-S  & 0.35  & 0.38  & 1.14  \\
\bottomrule
\end{tabular}
\label{tab:computation_cost}
\end{table}

%include memory overhead

% \textcolor{red}{TODO} 
% detailed environment setup instructions library versions 

% Python version     : 3.11.12 
% PyTorch version    : 2.6.0
% Torchvision version: 0.21.0
% Numpy version      : 2.0.2
% Pillow version     : 11.2.1
% Timm version       : 1.0.15
% Scipy version      : 1.15.3

% \subsection{Environment Setup and Dependencies}
% \label{appx:env}

We provide the detailed environment configuration used for all evaluations. All experiments were conducted on NVIDIA Tesla T4 GPUs hosted on Google Colab. We present the key software dependencies and their corresponding versions:

\begin{itemize}
    \item Python: 3.11.12 
    \item PyTorch: 2.6.0
    \item Torchvision: 0.21.0
    \item NumPy: 2.0.2
    \item Pillow: 11.2.1
    \item Timm: 1.0.15
    \item SciPy: 1.15.3 
\end{itemize}

% \begin{table*}[t]
% \centering
% \small
% \caption{Software dependencies and library versions used in our experiments.}
% \vspace{0.5em}
% \begin{tabular}{lc}
% \toprule
% \textbf{Library} & \textbf{Version} \\
% \midrule
% Python           & 3.11.12 \\
% PyTorch          & 2.6.0 \\
% Torchvision      & 0.21.0 \\
% NumPy            & 2.0.2 \\
% Pillow           & 11.2.1 \\
% Timm             & 1.0.15 \\
% SciPy            & 1.15.3 \\
% \bottomrule
% \end{tabular}
% \label{tab:env}
% \end{table*}

% We recommend using the same environment setup to reproduce the results presented in the paper. All code dependencies can be installed via \texttt{pip} or \texttt{conda}, and a requirements file is provided with our code repository.

%=============================================================
\section{Additional Experiments}
%=============================================================
\label{appx:additional_experiments}

\subsection{Quantitative Analysis for SSR}
\label{sec:ssr_quant}

To evaluate the effect of spectral regularization strength, we conduct an ablation study by varying the Gaussian blur standard deviation $\sigma$ used in Spectral Smoothness Regularization (SSR). Table~\ref{tab:sigma} reports the average attack success rates (ASR) on ViTs, CNNs, and defended CNNs for $\sigma \in \{0.5, 0.6, 0.7, 0.8\}$ across all surrogate models.

We observe a consistent trend: increasing $\sigma$ improves transferability to CNNs and defended CNNs, while slightly reducing ASR on ViTs. This trade-off reflects the role of SSR in suppressing high-frequency architecture-specific noise: improving cross-architecture generalization but marginally weakening model-specific alignment. For instance, in ViT-B/16, increasing $\sigma$ from 0.5 to 0.8 decreases ViT ASR from 83.21\% to 81.21\%, but improves CNN ASR from 58.77\% to 61.82\% and defended CNN ASR from 46.63\% to 52.33\%. A similar pattern is observed in PiT-B and CaiT-S/24.

Notably, the improvement on defended CNNs is particularly pronounced. For Visformer-S, the ASR on defended models improves from 46.96\% at $\sigma=0.5$ to 61.5\% at $\sigma=0.8$, a gain of over 14\%. These results confirm that SSR strengthens black-box transferability and robustness by encouraging low-frequency perturbations that are less dependent on the surrogate model’s internal architecture.

In practice, setting $\sigma$ between 0.6 and 0.8 offers a favorable trade-off, preserving sufficient ViT ASR while achieving substantial improvements on CNNs and defended models. This ablation supports the effectiveness of spectral regularization and its role in enhancing generalization under diverse adversarial settings.

\begin{table*}[t]
\centering
\small
\caption{Average attack success rate (ASR) (\%) against ViTs, CNNs, and defended CNNs across varying Gaussian blur strength $\sigma$. Increasing $\sigma$ generally improves transferability to CNNs and defended models by enforcing low-frequency perturbations, while slightly reducing white-box ASR on ViTs.}

\vspace{0.5em}
\resizebox{0.9\textwidth}{!}{%
\begin{tabular}{llccc|llccc}
\toprule
\textbf{Model} & \textbf{$\sigma$} & \textbf{ViTs} & \textbf{CNNs} & \textbf{Def-CNNs} &
\textbf{Model} & \textbf{$\sigma$} & \textbf{ViTs} & \textbf{CNNs} & \textbf{Def-CNNs} \\
\midrule

\multirow{4}{*}{ViT-B/16} 
& 0.5 & 83.21 & 58.77 &  46.63
& \multirow{4}{*}{CaiT-S/24} 
& 0.5 & 94.82 & 68.12 & 45.6 \\
& 0.6 & 83.12 & 61.85 & 49.86 
& & 0.6 & 94.57 & 71.9 & 51.76 \\
& 0.7 & 81.95 & 61.62 & 51.13 
& & 0.7 & 94.2 & 73.87 & 54.86 \\
& 0.8 & 81.21 & 61.82 &  52.33
& & 0.8 & 93.82 & 73.55 & 57.06 \\

\midrule

\multirow{4}{*}{PiT-B} 
& 0.5 & 90.3 & 80.85 &  49
& \multirow{4}{*}{Visformer-S} 
& 0.5 & 75.93 & 76.77 &  46.96 \\
& 0.6 & 91.63 & 83.4 &  54.9
& & 0.6 & 78.47 & 80.45 & 53.9 \\
& 0.7 & 91.36 & 83.27 &  57.06
& & 0.7 & 78.67 & 81.67 & 58.46 \\
& 0.8 & 90.02 & 83.45 &  58.6
& & 0.8 & 83.33 & 81.22 & 61.5 \\

\bottomrule
\end{tabular}
}
\label{tab:sigma}
\end{table*}

\begin{table*}[t]
\centering
\small
\caption{The average attack success rate (\%) against ViTs, CNNs, and defended CNNs by various transfer-based attacks with input diversity enhancement strategy. The best results are highlighted in bold. “PO” denotes PatchOut \cite{PNA}.}
\vspace{0.5em}
\resizebox{0.9\textwidth}{!}{%
\begin{tabular}{llccc|llccc}
\toprule
\textbf{Model} & \textbf{Attack} & \textbf{ViTs} & \textbf{CNNs} & \textbf{Def-CNNs} &
\textbf{Model} & \textbf{Attack} & \textbf{ViTs} & \textbf{CNNs} & \textbf{Def-CNNs} \\
\midrule

\multirow{7}{*}{ViT-B/16} 
& MIM+PO & 61.3 & 31.3 & 21.7 
& \multirow{7}{*}{CaiT-S/24} 
& MIM+PO & 70.3 & 44.0 & 29.3 \\
& VMI+PO & 69.1 & 42.8 & 30.9 
& & VMI+PO & 76.8 & 57.8 & 38.4 \\
& SGM+PO & 64.8 & 29.2 & 18.9 
& & SGM+PO & 85.1 & 49.2 & 29.3 \\
& PNA+PO & 70.8 & 42.6 & 29.9 
& & PNA+PO & 81.6 & 56.6 & 39.3 \\
& TGR+PO & 76.0 & 46.7 & 33.3 
& & TGR+PO & 88.8 & 60.5 & 40.5 \\
& ATT+PO & 77.1 & 51.7 & 37.1 
& & ATT+PO & 91.1 & 71.9 & 54.3 \\
& \textbf{Ours+PO} & \textbf{85.18$\uparrow$} & \textbf{64.17$\uparrow$} & \textbf{52.16$\uparrow$} 
& & \textbf{Ours+PO} & \textbf{91.15$\uparrow$} & \textbf{72.9$\uparrow$} & \textbf{56.46$\uparrow$} \\

\midrule

\multirow{7}{*}{PiT-B} 
& MIM+PO & 47.3 & 32.5 & 17.5 
& \multirow{7}{*}{Visformer-S} 
& MIM+PO & 54.9 & 45.7 & 23.4 \\
& VMI+PO & 59.5 & 46.2 & 35.8 
& & VMI+PO & 64.8 & 56.6 & 32.6 \\
& SGM+PO & 70.0 & 45.6 & 21.3 
& & SGM+PO & 51.6 & 44.3 & 15.0 \\
& PNA+PO & 73.1 & 57.8 & 32.7 
& & PNA+PO & 68.8 & 61.8 & 32.3 \\
& TGR+PO & 82.3 & 68.9 & 41.3 
& & TGR+PO & 70.4 & 64.3 & 33.5 \\
& ATT+PO & 84.2 & 75.2 & 48.4 
& & ATT+PO & 70.5 & 79.3 & 44.5 \\
& \textbf{Ours+PO} & \textbf{94.83$\uparrow$} & \textbf{87.7$\uparrow$} & \textbf{61.43$\uparrow$} 
& & \textbf{Ours+PO} & \textbf{84.42$\uparrow$} & \textbf{79.4$\uparrow$} & \textbf{58.06$\uparrow$} \\

\bottomrule
\end{tabular}
}
\label{tab:diversity_patchout}
\end{table*}
\subsection{Comparison of Attack Efficiency when using Input Diversity Technique}
% \label{appx:input_diversity}

To further assess the effectiveness and generality of our proposed TESSER framework, we evaluate its performance when combined with an input diversity enhancement strategy, specifically PatchOut (PO)~\cite{PNA}. This technique introduces random masking during inference to improve the robustness and transferability of adversarial perturbations.

Table~\ref{tab:diversity_patchout} presents the average Attack Success Rate (ASR) of different attack methods augmented with PO, tested across ViTs, CNNs, and defended CNNs. The experiments span four representative surrogate models: ViT-B/16, CaiT-S/24, PiT-B, and Visformer-S. 

Across all surrogate models and evaluation categories, TESSER+PO consistently achieves the highest ASR. For example, using PiT-B as the surrogate, TESSER+PO achieves an ASR of \textbf{94.83\%} on ViTs, \textbf{87.7\%} on CNNs, and \textbf{61.43\%} on defended CNNs, representing improvements of more than \textbf{+10\%} over the strongest baseline ATT+PO. Similar trends are observed with the other surrogate models, including CaiT-S/24 and ViT-B/16, where TESSER+PO continues to outperform baselines by wide margins.

These results demonstrate two key insights: (1) TESSER is orthogonal to input diversity methods, as its performance improves further when used with PO, and (2) our gradient modulation and spectral regularization strategies remain effective under randomized input transformations, indicating strong generalization.

In particular, on defended CNNs, traditionally difficult targets due to adversarial training, TESSER + PO outperforms all baselines by significant margins (e.g., + 7\% over ATT + PO with ViT-B/16). This highlights that FSGS and SSR lead to perturbations that survive stochastic augmentations while preserving transferability and robustness.

% These findings reinforce TESSER's effectiveness in real-world black-box scenarios, where query-efficient and robust transferability is crucial.

% \subsection{The Analysis of Our Attack with Different Module Settings}

% \subsection{Attack Efficiency}
% As shown in Table~\ref{tab:efficiency_effectiveness}, our TESSER method not only disrupts target models more effectively, achieving the highest confidence on the adversarial target class, but also requires significantly fewer iterations to reach misclassification. This highlights both the semantic sharpness and optimization efficiency of our attack compared to prior methods such as ATT and MIM.

% Our TESSER method successfully disrupted the model more effectively (i.e., higher confidence level) and more efficiently (i.e., fewer iteration number).
% \textcolor{red}{TODO: add results Table \ref{tab:efficiency_effectiveness}}

% \begin{table}[t]
% \centering
% \small
% \caption{Comparison of attack efficiency and effectiveness between TESSER and ATT. Reported values include the average iteration where the adversarial label stabilizes and the final confidence (\%) on the adversarial prediction.}
% \vspace{0.5em}
% \begin{tabular}{lcc}
% \toprule
% \textbf{Method} & \textbf{Stabilization Iteration ($\downarrow$)} & \textbf{Final Confidence (\%) ($\uparrow$)} \\
% \midrule
% ATT    & 6.8 & 87.93 \\
% TESSER & \textbf{5.1} & \textbf{91.37} \\
% \bottomrule
% \end{tabular}
% \label{tab:efficiency_comparison}
% \end{table}

\subsection{Adversarial Attack Efficiency and Confidence Dynamics}
\label{sec:efficiency}

To better assess the quality of adversarial examples beyond final attack success rate (ASR), we evaluate the efficiency and effectiveness of the generated perturbations in terms of iteration-wise model response. Specifically, we compare TESSER and ATT based on:

\begin{itemize}
    \item \textbf{Attack efficiency}: How quickly the model’s prediction flips and stabilizes to an adversarial label across iterations.
    \item \textbf{Attack effectiveness}: The final confidence of the model in the adversarial label after optimization completes.
\end{itemize}

\noindent
Table~\ref{tab:efficiency_comparison} summarizes the average iteration at which the target model stabilizes on the adversarial label (i.e., no further label flipping) and the average confidence on the adversarial class after 10 attack steps.

\begin{table} %[t]
\centering
\small
\caption{Comparison of attack efficiency and effectiveness between TESSER and ATT. We report the average iteration where the model prediction stabilizes on the adversarial label (lower is better) and the final model confidence (\%) in the adversarial class (higher is better).}
\vspace{0.5em}
\begin{tabular}{lcc}
\toprule
\textbf{Method} & \textbf{Stabilization Iteration ($\downarrow$)} & \textbf{Final Confidence (\%) ($\uparrow$)} \\
\midrule
ATT    & 6.8 & 87.93 \\
TESSER (Ours) & \textbf{5.1} & \textbf{91.37} \\
\bottomrule
\end{tabular}
\label{tab:efficiency_comparison}
\end{table}

\noindent
TESSER reaches a stable adversarial label approximately 1.7 iterations earlier than ATT, confirming its improved gradient alignment and optimization direction. Additionally, the final adversarial confidence achieved by TESSER is consistently higher, indicating stronger and more decisive misclassification. This validates that our semantic gradient modulation not only accelerates convergence but also increases attack effectiveness by pushing perturbations toward model-relevant, transferable features.

%=============================================================
\section{Additional Ablation Studies}
\label{appx:additional_ablation}
%=============================================================
% \subsection{FSGS vs Gradient Random Masking}
% \subsection{Hyperparameter Sensitivity}
% \subsection{SSR Parameter Sensitivity}

\subsection{Qualitative Comparison}
To further analyze the effectiveness and interpretability of our proposed method, we present qualitative comparisons between TESSER (FSGS+SSR) and the state-of-the-art ATT~\cite{ATT} across a diverse set of samples from ImageNet. Figure~\ref{fig:qual_ablation} shows clean and adversarial images, Grad-CAM heatmaps, and FFT visualizations for perturbations.

\paragraph{Semantic Alignment.} In nearly all examples, the adversarial images generated by TESSER show stronger alignment with semantically meaningful regions (e.g., bird bodies, faces, objects of interest) compared to ATT. This is reflected in the Grad-CAM visualizations guided by the adversarial label. Despite being misclassified, the Grad-CAM of TESSER adversarial examples remains spatially focused on relevant visual features, validating the effectiveness of FSGS in preserving semantically informative gradients during attack optimization.

\paragraph{Spectral Coherence.} The FFT visualizations reveal that TESSER perturbations exhibit smoother and more coherent frequency profiles, with lower high-frequency energy content than those generated by ATT. This is consistently supported by the computed High-Frequency Energy Ratio (HFER), which is reduced by 6--16\% across examples when using FSGS+SSR. Lower HFER confirms that SSR suppresses architecture-specific, high-frequency noise that often undermines transferability.

% \paragraph{Conclusion.} 
These additional ablations reinforce our core claim: FSGS guides perturbations toward transferable, semantically meaningful features, while SSR regularizes their spectral profile to avoid overfitting to model-specific noise. Together, these properties lead to adversarial examples that are more interpretable and more effective in black-box transfer scenarios.

% Reference figure
\begin{figure*}%[t]
    \centering
    \includegraphics[width=\textwidth]{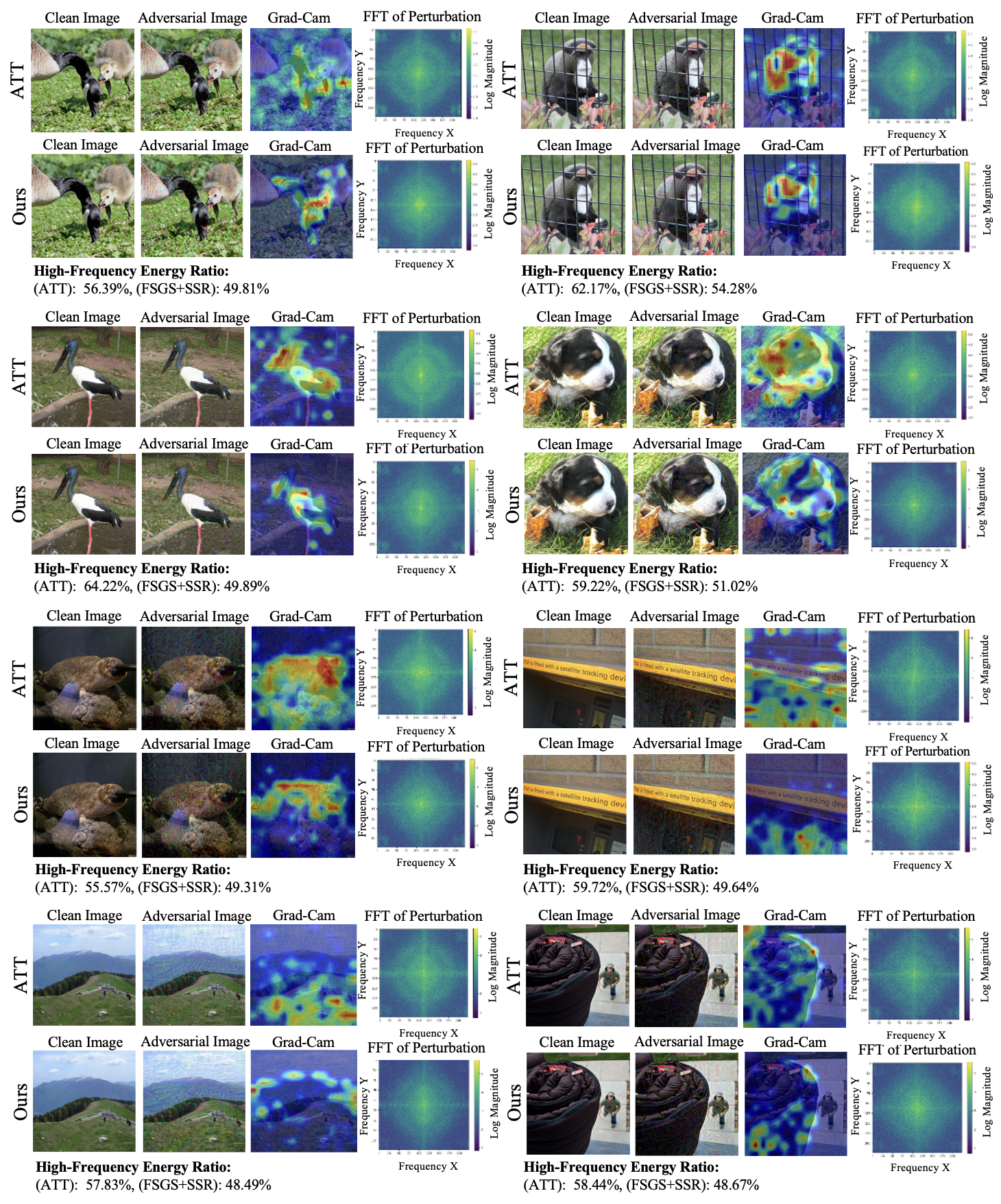}
    \caption{Qualitative comparison between ATT and our TESSER method (FSGS+SSR). Each block shows clean image, adversarial image, Grad-CAM heatmap, and FFT of the perturbation. TESSER yields semantically aligned and spectrally smooth perturbations, with consistently lower high-frequency energy ratios.}
    \label{fig:qual_ablation}
\end{figure*}

\subsection{Impact of Module-wise Gradient Weakening}

We compare ASR with and without $\omega$ (i.e., setting all $\omega$ = 1 disables gradient weakening). On ViT-B/16, using $\omega$ improves ASR from 63.0\% to 83.2\% (Table \ref{tab:omega}), confirming the effectiveness of selective gradient suppression.

\begin{table} %[t]
\centering
\caption{TESSER attack success rate (\%) with and without module-wise gradient weakening $\omega$  against eight ViT models and the average attack success rate (\%) of all black-box models. The best results are highlighted in \textbf{bold}.}
\resizebox{1\textwidth}{!}{%
\begin{tabular}{llccccccccc}
\toprule
\textbf{Model} & \textbf{Attack} & \textbf{ViT-B/16} & \textbf{PiT-B} & \textbf{CaiT-S/24} & \textbf{Visformer-S} & \textbf{DeiT-B} & \textbf{TNT-S} & \textbf{LeViT-256} & \textbf{ConViT-B} & \textbf{Avg$_{bb}$} \\
\midrule
\multirow{2}{*}{ViT-B/16} 
& w/o $\omega$ & \textbf{99.6*} & 40.9 & 71 & 45.7 & 69.3 & 64.8 & 40.2 & 72.5 & 63 \\
& w $\omega$ & \textbf{100*} & \textbf{61.7} & \textbf{94} & \textbf{68.3} & \textbf{92.5} & \textbf{85.6} & \textbf{72.2} & \textbf{91.4} &  \textbf{83.2$\uparrow$} \\
\midrule
\multirow{2}{*}{PiT-B}
& w/o $\omega$ & 58.1 & \textbf{99.9*} & 69.2 & 74.7 & 69 & 74 & 66.9 & 70.3 & 72.76 \\
& w $\omega$ & \textbf{74.9} & \textbf{100.0*} & \textbf{91.6} & \textbf{93.2} & \textbf{92.1} & \textbf{95} & \textbf{92.4} & \textbf{91.7} & \textbf{91.4$\uparrow$}  \\
\bottomrule
\label{tab:omega}
\end{tabular}
}
\end{table}

\subsection{Impact of Selective Attention Truncation}
On PiT-B, disabling attention truncation (i.e., no $l_{\text{cut}}$) leads to an average 7\% drop in ASR (Table \ref{tab:lcut}), validating the importance of focusing on early-layer token gradients.

\begin{table} %[t]
\centering
\caption{TESSER attack success rate (\%) with and without selective attention truncation $l_{\text{cut}}$ against eight ViT models and the average attack success rate (\%) of all black-box models. The best results are highlighted in \textbf{bold}.}
\resizebox{1\textwidth}{!}{%
\begin{tabular}{llccccccccc}
\toprule
\textbf{Model} & \textbf{Attack} & \textbf{ViT-B/16} & \textbf{PiT-B} & \textbf{CaiT-S/24} & \textbf{Visformer-S} & \textbf{DeiT-B} & \textbf{TNT-S} & \textbf{LeViT-256} & \textbf{ConViT-B} & \textbf{Avg$_{bb}$} \\
\midrule
\multirow{2}{*}{ViT-B/16} 
& w/o $l_{\text{cut}}$ & \textbf{100*} & 60.4 & 87.3 & 67.7 & 88.3 & 85.4 & 65.4 & 89.9 & 80.55 \\
& w $l_{\text{cut}}$ & \textbf{100*} & \textbf{61.7} & \textbf{94} & \textbf{68.3} & \textbf{92.5} & \textbf{85.6} & \textbf{72.2} & \textbf{91.4} &  \textbf{83.2$\uparrow$} \\
\midrule
\multirow{2}{*}{PiT-B}
& w/o $l_{\text{cut}}$ & 73.1 & \textbf{99.7*} & 82.1 & 86 & 84 & 86.4 & 83.2 & 84.1 &  84.82 \\
& w $l_{\text{cut}}$ & \textbf{74.9} & \textbf{100.0*} & \textbf{91.6} & \textbf{93.2} & \textbf{92.1} & \textbf{95} & \textbf{92.4} & \textbf{91.7} & \textbf{91.4$\uparrow$}  \\
\bottomrule
\label{tab:lcut}
\end{tabular}
}
\end{table}

\subsection{Impact of rescaling factor}
Setting $\lambda$ = 0 disables adaptive FSGS scaling (only $\gamma_{base}$ is used as a fixed multiplier). On ViT-B/16, enabling $\lambda$ improves ASR by an average of 30\% (Table \ref{tab:lambda}), highlighting the value of adaptive gradient modulation in improving attack effectiveness.

In addition, we empirically validate the effectiveness of our scaling strategy by comparing it to a random scaling baseline. As shown in Table \ref{tab:tesser-scaling}, our method significantly outperforms random scaling across all target models, achieving consistently higher ASR and demonstrating stronger transferability.

\begin{table} %[t]
\centering
\caption{TESSER attack success rate (\%) with and without rescaling factor $\lambda$ against eight ViT models and the average attack success rate (\%) of all black-box models. The best results are highlighted in \textbf{bold}.}
\resizebox{1\textwidth}{!}{%
\begin{tabular}{llccccccccc}
\toprule
\textbf{Model} & \textbf{Attack} & \textbf{ViT-B/16} & \textbf{PiT-B} & \textbf{CaiT-S/24} & \textbf{Visformer-S} & \textbf{DeiT-B} & \textbf{TNT-S} & \textbf{LeViT-256} & \textbf{ConViT-B} & \textbf{Avg$_{bb}$} \\
\midrule
\multirow{2}{*}{ViT-B/16} 
& w/o $\lambda$ & \textbf{84.1*} & 32.1 & 54.5 & 42.3 & 55.1 & 56 & 43.1 & 56.9 & 53.01 \\
& w $\lambda$ & \textbf{100*} & \textbf{61.7} & \textbf{94} & \textbf{68.3} & \textbf{92.5} & \textbf{85.6} & \textbf{72.2} & \textbf{91.4} &  \textbf{83.2$\uparrow$} \\
\midrule
\multirow{2}{*}{PiT-B}
& w/o $\lambda$ & 51.2 & \textbf{92.9*} & 61.2 & 67.9 & 63.6 & 67.9 & 66.5 & 62.6 &  66.72 \\
& w $\lambda$ & \textbf{74.9} & \textbf{100.0*} & \textbf{91.6} & \textbf{93.2} & \textbf{92.1} & \textbf{95} & \textbf{92.4} & \textbf{91.7} & \textbf{91.4$\uparrow$}  \\
\bottomrule
\label{tab:lambda}
\end{tabular}
}
\end{table}

\begin{table*} %[t]
\centering
\caption{TESSER Attack Success Rate (\%) with our Scaling strategy vs.\ Random Scaling. \emph{Bold} = better of the two scalings for the same surrogate–target pair. \emph{*} denotes white-box (surrogate equals target).}
\resizebox{1\textwidth}{!}{%
\label{tab:tesser-scaling}
\begin{tabular}{llccccccccc}
\toprule
\textbf{Surrogate} & \textbf{Scaling} & \textbf{ViT-B/16} & \textbf{PiT-B} & \textbf{CaiT-S/24} & \textbf{Visformer-S} & \textbf{DeiT-B} & \textbf{TNT-S} & \textbf{LeViT-256} & \textbf{ConViT-B} & \textbf{Avg} \\
\midrule
\multirow{2}{*}{ViT-B/16}
  & Random & \textbf{86.4*} & 30.6 & 52.3 & 37.8 & 53.0 & 55.4 & 37.8 & 55.7 & 51.12 \\
  & ours   & \textbf{100*}  & \textbf{61.7} & \textbf{94.0} & \textbf{68.3} & \textbf{92.5} & \textbf{85.6} & \textbf{72.2} & \textbf{91.4} & \textbf{83.2} $\uparrow$ \\
\midrule
\multirow{2}{*}{PiT-B}
  & Random & 28.4 & \textbf{100*} & 34.6 & 44.8 & 33.9 & 44.1 & 38.3 & 38.2 & 45.28 \\
  & ours   & \textbf{74.9} & \textbf{100.0*} & \textbf{91.6} & \textbf{93.2} & \textbf{92.1} & \textbf{95.0} & \textbf{92.4} & \textbf{91.7} & \textbf{91.4} $\uparrow$ \\
\bottomrule
\end{tabular}}
\end{table*}

%=============================================================
\section{Evaluating the Transferability of Different Attack Methods for Targeted Attacks}
\label{appx:targeted}
%=============================================================
While our main experiments focus on untargeted attacks, both FSGS and SSR are model-agnostic and loss-independent components applied during backpropagation. Therefore, they are fully compatible with targeted attack formulations—only the loss needs to be adapted. To validate this, we conducted targeted attack experiments using the target label set as (true label + 1). As shown in Table \ref{tab:targeted}, our method (TESSER) achieves a significantly higher targeted ASR of 43.08\%, outperforming PGD (16.06\%), MIM (22.01\%), and ATT (33.33\%), demonstrating the effectiveness and transferability of our approach in targeted settings as well. The table will be included in the revised version.

\begin{table} %[t]
\centering
\caption{The attack success rate (\%) of various transfer-based targeted attacks against eight ViT models and the average attack success rate (\%) of all black-box models. The best results are highlighted in \textbf{bold}.}
\resizebox{\textwidth}{!}{%
\begin{tabular}{llccccccccc}
\toprule
\textbf{Model} & \textbf{Attack} & \textbf{ViT-B/16} & \textbf{PiT-B} & \textbf{CaiT-S/24} & \textbf{Visformer-S} & \textbf{DeiT-B} & \textbf{TNT-S} & \textbf{LeViT-256} & \textbf{ConViT-B} & \textbf{Avg$_{bb}$} \\
\midrule
\multirow{4}{*}{ViT-B/16} 
& PGD & \textbf{96.1*} & 2.1 & 6.8 & 2.7 & 5.6 & 6 & 1.7 & 7.5 & 16.06 \\
& MIM & \textbf{99.4*} & 6.6 & 16.1 & 6.3 & 13.9 & 11.8 & 4.4 & 17.6 & 22.01 \\
& ATT & \textbf{99.5*} & 8.7 & 35.6 & 9.6 & 33.6 & 30.9 & 6.7 & 42.1 & 33.33 \\
& Ours & \textbf{99.6*} & \textbf{17.9} & \textbf{47.3} & \textbf{21} & \textbf{48.6} & \textbf{42.1} & \textbf{16.1} & \textbf{52.1} &  \textbf{43.08$\uparrow$} \\
\midrule
\multirow{4}{*}{PiT-B} 
& PGD & 0.8 & \textbf{96.6*} & 1.1 & 2.3 & 0.9 & 2.1 & 1.3 & 0.7 & 13.22  \\
& MIM & 5.3 & \textbf{99.9*} & 5.1 & 8 & 4.9 & 6.5 & 4 & 5.5 & 17.4 \\
& ATT & 12.1 & \textbf{100*} & 14.6 & 20.2 & 13.2 & 18.8 & 12.2 & 16.8 &  25.98\\
& Ours & \textbf{20.4} & \textbf{100.0*} & \textbf{26} & \textbf{33} & \textbf{28.4} & \textbf{30.2} & \textbf{26.3} & \textbf{27.2} & \textbf{47.86$\uparrow$} \\
\bottomrule
\label{tab:targeted}
\end{tabular}
}
\end{table}

%=============================================================
\section{Evaluating TESSER Performance vs. Autoattack}
\label{appx:autoattack}
%=============================================================
While AutoAttack (AA) is a strong white-box evaluation benchmark, it is not optimized for transfer-based black-box settings. To enable a fair comparison, we evaluate both TESSER and AutoAttack under the same transfer setup with a fixed perturbation budget of epsilon = 16/255. As reported in Table \ref{tab:autoattack}, TESSER achieves over 50\% higher average ASR compared to AutoAttack across multiple target models. For example, when attacking PiT-B from a ViT-B/16 surrogate, TESSER achieves an ASR of 61.7\%, compared to only 10.5\% for AutoAttack. This gap is expected, as TESSER is explicitly designed to optimize black-box transferability, whereas AutoAttack is tailored for white-box robustness evaluation. 

\begin{table} %[t]
\centering
\caption{The attack success rate (\%) of Autoattack (AA) vs. TESSER against eight ViT models and the average attack success rate (\%) of all black-box models. The best results are highlighted in \textbf{bold}.}
\resizebox{\textwidth}{!}{%
\begin{tabular}{llccccccccc}
\toprule
\textbf{Model} & \textbf{Attack} & \textbf{ViT-B/16} & \textbf{PiT-B} & \textbf{CaiT-S/24} & \textbf{Visformer-S} & \textbf{DeiT-B} & \textbf{TNT-S} & \textbf{LeViT-256} & \textbf{ConViT-B} & \textbf{Avg$_{bb}$} \\
\midrule
\multirow{2}{*}{ViT-B/16} 
& AA & \textbf{99.9*} & 10.5 & 42.9 & 13.2 & 33.6 & 38.4 & 17 & 40.4 & 36.98 \\
& Ours & \textbf{100*} & \textbf{61.7} & \textbf{94} & \textbf{68.3} & \textbf{92.5} & \textbf{85.6} & \textbf{72.2} & \textbf{91.4} &  \textbf{83.2$\uparrow$} \\
\midrule
\multirow{2}{*}{PiT-B}
& AA & 10 & \textbf{98.5*}  & 13.5 & 24 & 11.3 & 21.2 & 22.4 & 13.9 &  26.85\\
& Ours & \textbf{74.9} & \textbf{100.0*} & \textbf{91.6} & \textbf{93.2} & \textbf{92.1} & \textbf{95} & \textbf{92.4} & \textbf{91.7} & \textbf{91.4$\uparrow$}\\
\bottomrule
\label{tab:autoattack}
\end{tabular}
}
\end{table}

\section{Evaluation TESSER Transferability to Visual State Space Models}

To further increase the architectural dissimilarity, we evaluate TESSER transferability to Vision Mamba \cite{zhu2024vision}—a state-space-based architecture with bidirectional SSMs and position-aware embeddings, representing a class of models distinct from transformers. As shown in Table \ref{tab:vim}, TESSER consistently achieves the highest ASR 87.1\% and 76.7\% on both Vim-Tiny and Vim-Small, respectively compared to 80.9\% and 69\% for sota ATT attack, demonstrating robust transfer even under significant architectural and representational divergence.

\begin{table} %[t]
\centering
\caption{Comparative experiments of different attack methods on VIM. “clean” indicates that clean images are classified and all results indicate the percentage of classification errors (\textit{i.e.}, ASR).}
\begin{tabular}{llcc}
\toprule
\textbf{Model} & \textbf{Attack} & \textbf{VIM-tiny} & \textbf{VIM-small}  \\
\midrule
\multirow{4}{*}{ViT-B/16} 
& clean         & 3.1 & 0.9   \\
& MIM        & 45.6 & 42   \\
& ATT     & 80.9 & 69   \\
& TESSER     & \textbf{87.1}\textuparrow & \textbf{76.7}\textuparrow \\ %74.7
\midrule
\multirow{3}{*}{PiT-B} 
& MIM        & 32.3 & 34.9  \\
& ATT     & 53.4 & 55.1   \\
& TESSER    & \textbf{77.4}\textuparrow & \textbf{80.7}\textuparrow  \\
\bottomrule
\label{tab:vim}
\end{tabular}%}
\end{table}

% \appendix
\newpage
% \section{Technical Appendices and Supplementary Material}
% Technical appendices with additional results, figures, graphs and proofs may be submitted with the paper submission before the full submission deadline (see above), or as a separate PDF in the ZIP file below before the supplementary material deadline. There is no page limit for the technical appendices.

%%%%%%%%%%%%%%%%%%%%%%%%%%%%%%%%%%%%%%%%%%%%%%%%%%%%%%%%%%%%
%%%%%%%%%%%%%%%%%%%%%%%%%%%%%%%%%%%%%%%%%%%%%%%%%%%%%%%%%%%%
%%%%%%%%%%%%%%%%%%%%%%%%%%%%%%%%%%%%%%%%%%%%%%%%%%%%%%%%%%%%
%%%%%%%%%%%%%%%%%%%%%%%%%%%%%%%%%%%%%%%%%%%%%%%%%%%%%%%%%%%%
%%%%%%%%%%%%%%%%%%%%%%%%%%%%%%%%%%%%%%%%%%%%%%%%%%%%%%%%%%%%
%%%%%%%%%%%%%%%%%%%%%%%%%%%%%%%%%%%%%%%%%%%%%%%%%%%%%%%%%%%%

\newpage

\end{document}